\theoremstyle{plain}
\newtheorem{theorem}{Theorem}[section]
\newtheorem{proposition}[theorem]{Proposition}
\newtheorem{lemma}[theorem]{Lemma}
\theoremstyle{definition}
\newtheorem{definition}[theorem]{Definition}
\theoremstyle{remark}
\title{TDFormer: A Top-Down Attention-Controlled \\ Spiking Transformer}
\author{%
\textbf{Zizheng Zhu}$^{1,5}$\footnotemark[1] \quad
\textbf{Yingchao Yu}$^{1,4}$\footnotemark[1] \quad
\textbf{Zeqi Zheng}$^{1,2}$\footnotemark[1] \quad \\
\textbf{Zhaofei Yu}$^{3}$ \quad
\textbf{Yaochu Jin}$^{1}$\footnotemark[2] \quad \\
$^1$Westlake University \quad
$^2$Zhejiang University \quad
$^3$Peking University \quad 
$^4$Donghua University \\
$^5$University of Electronic Science and Technology of China \\
}
\begin{document}

\footnotetext[1]{Equal contribution.}
\footnotetext[2]{Corresponding author.}

\maketitle

\begin{abstract}
  Traditional spiking neural networks (SNNs) can be viewed as a combination of multiple subnetworks with each running for one time step, where the parameters are shared, and the membrane potential serves as the only information link between them. However, the implicit nature of the membrane potential limits its ability to effectively represent temporal information. As a result, each time step cannot fully leverage information from previous time steps, seriously limiting the model's performance. Inspired by the top-down mechanism in the brain, we introduce TDFormer, a novel model with a top-down feedback structure that functions hierarchically and leverages high-order representations from earlier time steps to modulate the processing of low-order information at later stages. The feedback structure plays a role from two perspectives: 1) During forward propagation, our model increases the mutual information across time steps, indicating that richer temporal information is being transmitted and integrated in different time steps. 2) During backward propagation, we theoretically prove that the feedback structure alleviates the problem of vanishing gradients along the time dimension. We find that these mechanisms together significantly and consistently improve the model performance on multiple datasets. In particular, our model achieves state-of-the-art performance on ImageNet with an accuracy of 86.83\%.
\end{abstract}

\section{Introduction}
    Spiking Neural Networks (SNNs) are more energy-efficient and biologically plausible than traditional artificial neural networks (ANNs) \cite{malcolm2023comprehensive}. Transformer-based SNNs combine the architectural advantages of Transformers with the energy efficiency of SNNs, resulting in a powerful and efficient models that have attracted increasing research interest in recent years \cite{zhou2023spikformer,zhou2024qkformer,zhou2023spikingformer,yao2023spikedriven,yao2024spikedriven}. However, there is still a big performance gap between existing SNNs and ANNs. This is because SNNs represent information using binary spike activations, whereas ANNs use floating-point numbers, resulting in reduced representational capacity and degraded performance. Moreover, the non-differentiability of spikes hinders effective training with gradient-based methods.
    
  In traditional SNNs, a common approach to increase representational capacity is to expand the time step $T$. However, SNNs trained with direct coding and standard learning methods \cite{wu2019direct} lack structural mechanisms for temporal adaptation. Temporal information is solely conveyed through membrane potential dynamics, while the network architecture, parameters, and inputs remain fixed across time steps. This reliance on membrane dynamics imposes two fundamental limitations. First, temporal information can only be expressed when spikes are fired, yet firing rates are typically low across layers, restricting the bandwidth of information flow. Moreover, the cumulative nature of membrane potentials leads to loss of temporal detail, as earlier spike patterns are summed. Second, temporal gradients must propagate solely through membrane potentials, which can result in vanishing gradients\cite{ding2025rethinking,meng2023towards}. We further confirm these limitations through temporal correlation analysis shown in Figure~\ref{vis1}, which demonstrates the limited representational capacity of membrane potentials, and theoretical derivation in appendix \ref{Gradient}.

    Previous work has been done to enhance the ability of SNNs to represent temporal information, e.g., by initializing the membrane potential and altering the surrogate gradients and dynamics equations \cite{shen2024rethinking,liudeeptage,huang2024clif}. Furthermore, some approaches have incorporated the dimension of time into attention mechanisms, resulting in time complexity that scales linearly with the number of simulation time steps \cite{lee2024spiking}. However, structural mechanisms to facilitate information flow across multiple time steps remain largely unexplored. We argue that adding connections between different time steps has the following two benefits: First, in forward propagation, such connections help the model better leverage features from previous time steps. Second, in backpropagation, structural connections support gradient flow and help mitigate vanishing gradients caused by the membrane potential dynamics.

    \begin{figure}[]
    \centering
    \includegraphics[width=0.45\textwidth]{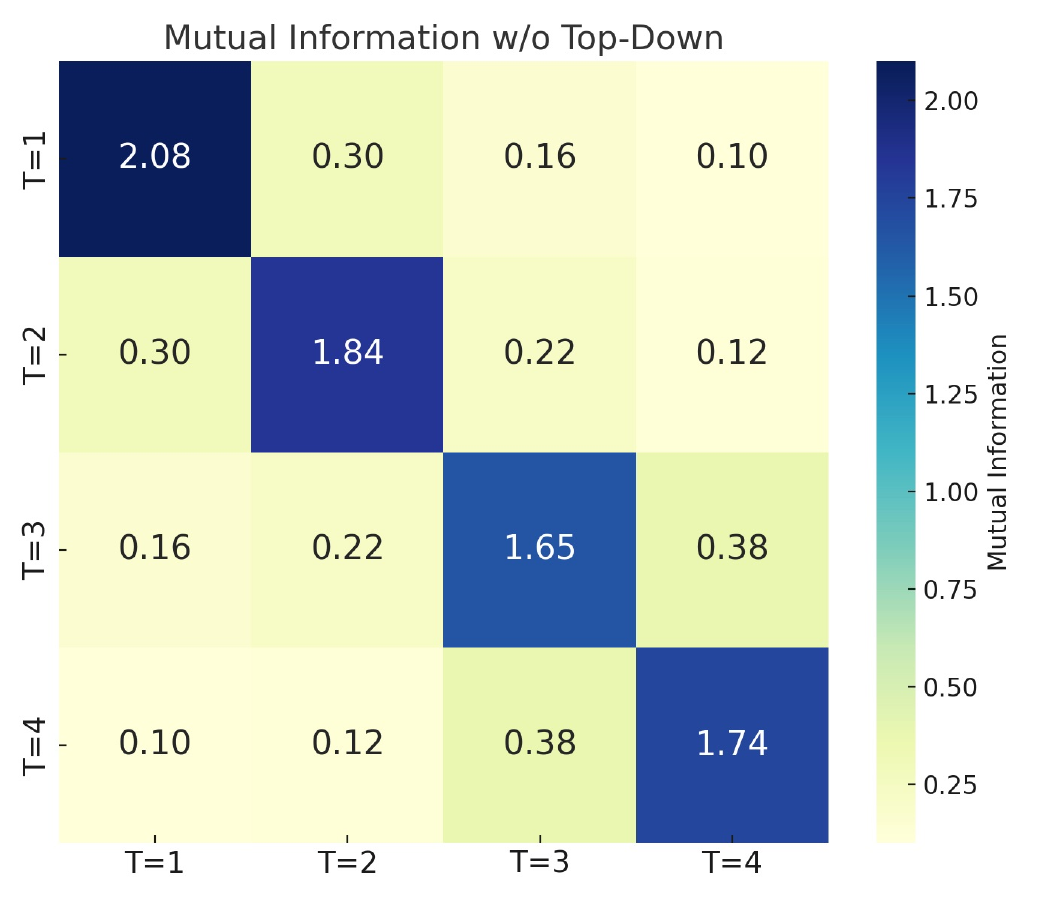} 
    \hfill
    \includegraphics[width=0.45\textwidth]{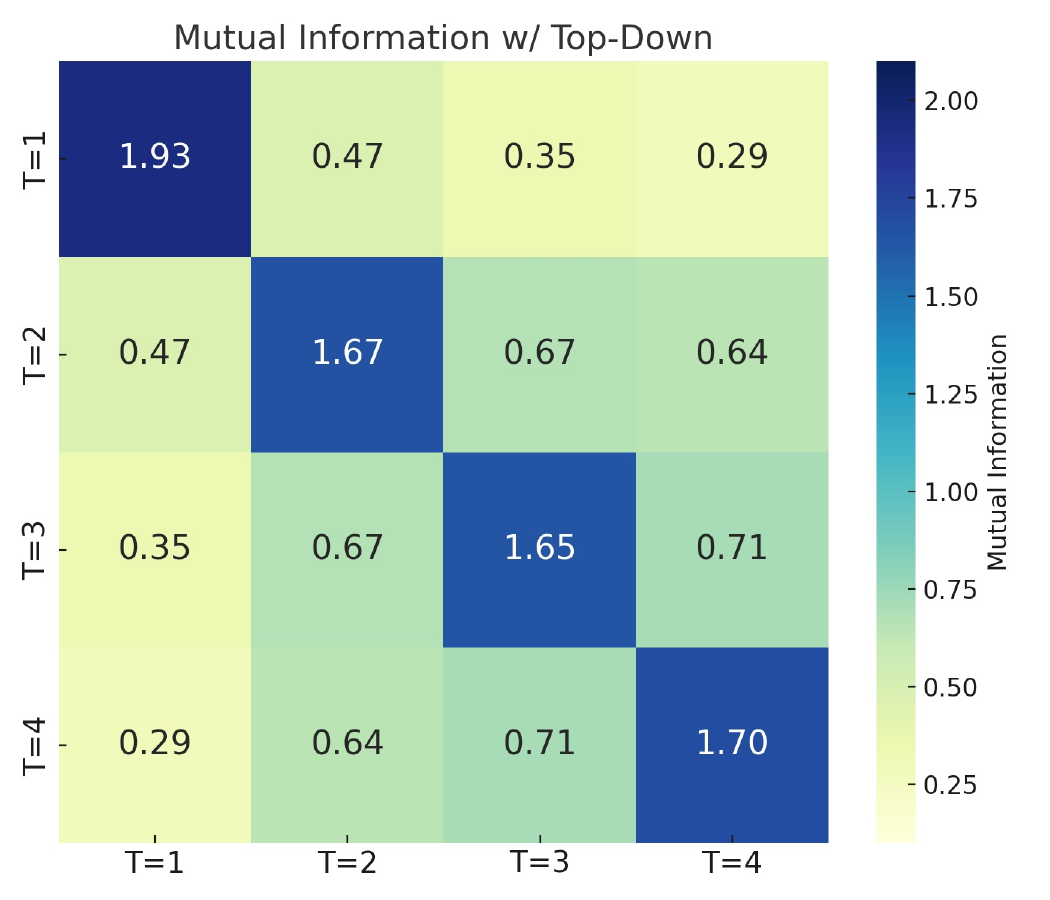}
    \caption{Visualization of mutual information matrices of features across time steps on ImageNet. The left panel shows the baseline model; the right panel shows the model incorporating feedback connections. A higher level of mutual information suggests that the model captures more consistent and temporally dependent features across time steps}
    \label{vis1}
    \end{figure}

    While traditional SNNs rely on bottom-up signal propagation, top-down mechanisms are prevalent in the brain, especially between the prefrontal and visual cortices \cite{gilbert2013top,buschman2007top,reynolds2009normalization,corbetta1998common}, as shown in Figure \ref{architecture}. These mechanisms are fundamental to how the brain incrementally acquires visual information over time, with higher-level cognitive processes guiding the extraction of lower-level sensory features, and prior knowledge informing the interpretation and refinement of new sensory input. Inspired by top-down mechanisms, we introduce TDFormer, a Transformer-based SNN architecture that incorporates a top-down feedback structure to improve temporal information utilization. Our main contributions can be summarized as follows:

    \begin{itemize}
    \item We identify structural limitations in traditional SNNs, showing that features across time steps exhibit weak mutual information, indicating insufficient temporal integration and utilization.
  
    
    \item We propose TDFormer, a Transformer-based SNN with a novel top-down feedback structure. We show that the proposed structure improves temporal information utilization, and provide theoretical analysis showing it mitigates vanishing gradients along the temporal dimension.

    \item We demonstrate state-of-the-art performance across multiple benchmarks with minimal energy overhead, achieving ANN-level accuracy on ImageNet while preserving the efficiency of SNNs.
    \end{itemize}

\section{Related Works}
\label{gen_inst}
\subsection{Transformer-based SNNs}
Spikformer \cite{zhou2023spikformer} presented the first Transformer architecture based on SNNs, laying the groundwork for spike-based self-attention mechanisms. Spike-driven TransformerV1 \cite{yao2023spikedriven} introduced a spike-driven mechanism to effectively process discrete-time spike signals and employed stacked transformer layers to capture complex spatiotemporal features. Built on \cite{yao2023spikedriven}, Spike-driven TransformerV2 \cite{yao2024spikedriven} enhanced the spike-driven mechanism and added dynamic weight adjustment to improve adaptability and accuracy in processing spike data. SpikformerV2 \cite{zhou2024spikformer} was specifically optimized for high-resolution image recognition tasks, incorporating an improved spike encoding method and a multi-layer self-attention mechanism. SpikeGPT \cite{zhu2023spikegpt} proposed an innovative combination of generative pre-trained Transformers with SNNs. SGLFormer \cite{zhang2024sglformer} enhanced feature representations by effectively capturing both global context and local details.

\subsection{Models with Top-Down Mechanisms}
Unlike bottom-up processes that are driven by sensory stimuli, top-down attention is governed by higher cognitive processes such as goals, previous experience, or prior knowledge\cite{ZHENG20113158}. This mechanism progressively acquires information by guiding the focus of attention to specific regions or features of the visual scene. It can be seen as a feedback loop where higher-level areas provide signals that modulate the processing of lower-level sensory inputs, ensuring that the most relevant information is prioritized.

Many works have explored top-down attention mechanisms to improve model performance in traditional ANNs. For example, Zheng et al. \cite{ZHENG20113158} proposed FBTP-NN, which integrates bottom-up and top-down pathways to enhance visual object recognition, where top-down expectations modulate neuron activity in lower layers \cite{ZHENG20113158}. Similarly, Anderson et al. introduced a model combining bottom-up and top-down attention for image captioning and visual question answering, where top-down attention weights features based on task context \cite{anderson2018bottom}. Shi et al. introduced a top-down mechanism for Visual Question Answering (VQA), where high-level cognitive hypotheses influence the focus on relevant scene parts \cite{shi2023top}. Finally, Abel and Ullman proposed a network that combines back-propagation with top-down attention to adjust gradient distribution and focus on important features \cite{abel2023top}.

\section{Preliminaries}
\label{headings}


\subsection{The Spiking Neuron}
The fundamental distinction between SNNs and ANNs lies in their neuronal activation mechanisms. Drawing on established research \cite{zhou2023spikformer, zhou2023spikingformer, yao2023spikedriven, zhou2024qkformer}, we select the Leaky Integrate-and-Fire (LIF) \cite{gerstner2014neuronal} neuron model as our primary spike activation unit. LIF neuron dynamics can be formulated by:
\begin{align}
\label{lif1}
V[t] &=H[t](1 - S[t])+V_{\text{reset}}S[t], \\
H[t] &=V[t-1]+\frac{1}{\tau}(X[t]-(V[t-1]-V_{\text{reset}})), \\
S[t] &=\Theta(H[t]-V_{\text{th}}),
\end{align}
where $ V_{\text{reset}} $ is the reset potential. When a spike is generated, $ S[t] = 1 $, the membrane potential $ V[t] $ is reset to $ V_{\text{reset}} $; otherwise, it remains at the hidden membrane potential $ H[t] $. Moreover, $ \tau $ represents the membrane time constant, and the input current $ X[t] $ is decay-integrated into $ H[t] $.

\subsection{Spike-Based Self-Attention Mechanisms}
A critical challenge in designing spike-based self-attention is eliminating floating-point matrix multiplication in Vanilla Self-Attention (VSA) \cite{vaswani2017attention}, which is crucial for utilizing the additive processing characteristics of SNNs.

\textbf{Spiking Self-Attention} (SSA) 
Zhou et al. \cite{zhou2023spikformer} first leveraged spike dynamics to replace the softmax operation in VSA, thereby avoiding costly exponential and division calculations, and reducing energy consumption. The process of SSA is as follows:
\begin{align}
\label{SSA1}
I_s = \mathcal{SN}(BN(X W_I)), I \in \{Q, K, V\}, \\
\label{SSA2}
\mathrm{SSA}(Q_s, K_s, V_s) = \mathcal{SN}(Q_s K_s^{\top} V_s * s),
\end{align}
where $ W \in \mathbb{R}^{T \times N \times D} $ denotes a learnable weight matrix, $ I_s $ represents the spiking representations of query $ Q_s $, key $ K_s $, and value $ V_s $. Here, $ \mathcal{SN}(\cdot) $ denotes the LIF neuron, and $ s $ is a scaling factor.

\textbf{Spike-Driven Self-Attention} (SDSA) 
Yao et al. \cite{yao2023spikedriven, yao2024spikedriven} improved the SSA mechanism by replacing the matrix multiplication with the Hadamard product and computing the attention via column-wise summation, effectively utilizing the additive properties of SNNs. The first version of SDSA \cite{yao2023spikedriven} is as follows:
\begin{equation}
    \label{SDSA1}
    \mathrm{SDSA_1}(Q_s, K_s, V_s) = Q_s \otimes \mathcal{SN}(\mathrm{SUM_c}(K_s \otimes V_s)),
\end{equation}
where $ \otimes $ denotes the Hadamard product, $ \mathrm{SUM_c} (\cdot) $ represents the column-wise summation. Furthermore, the second version of SDSA \cite{yao2024spikedriven} is described as follows:
\begin{equation}
    \label{SDSA2}
    \mathrm{SDSA_2}(Q_s, K_s, V_s) = \mathcal{SN}_s((Q_s K_s^{\top})V_s),
\end{equation}
where $ \mathcal{SN}_s $ denotes a spiking neuron with a threshold of $ s \cdot V_{\text{th}} $.
\textbf{Q-K Attention} (QKA) The work in \cite{zhou2024qkformer} reduces the computational complexity from quadratic to linear by utilizing only the query and key. QKA can be further divided into two variants: Q-K Token Attention (QKTA) and Q-K Channel Attention (QKCA). The formulations for QKTA and QKCA are provided in Equations \ref{QKTA} and \ref{QKTA_2}, respectively:
\begin{align}
    \label{QKTA}
     \mathrm{QKTA}(Q_s, K_s) &= \mathcal{SN}(\sum_{i=0}^D Q_{s}(i, j)) \otimes K_s, \\
    \label{QKTA_2}
    \mathrm{QKCA}(Q_s, K_s) &= \mathcal{SN}(\sum_{j=0}^N Q_{s}(i, j)) \otimes K_s,
\end{align}
where $N $ denotes the token number, $ D $ represents the channel number.
\section{Method}
\label{others}
In this section, we introduce TDFormer, a Transformer-based SNN model featuring a top-down feedback structure. We describe its architecture, including the division into sub-networks for feedback processing. We theoretically show that the attention module prior to the LIF neuron in the feedback pathway exhibits lower variance compared to SSA and QKTA, and we provide guidance for hyperparameter selection. Finally, we introduce the training loss and inference process. Detailed mathematical derivations are provided in appendix~\ref{a2}.

\begin{figure*}[t]
\label{PMCM}
\centering
  \includegraphics[width=1.0\textwidth]{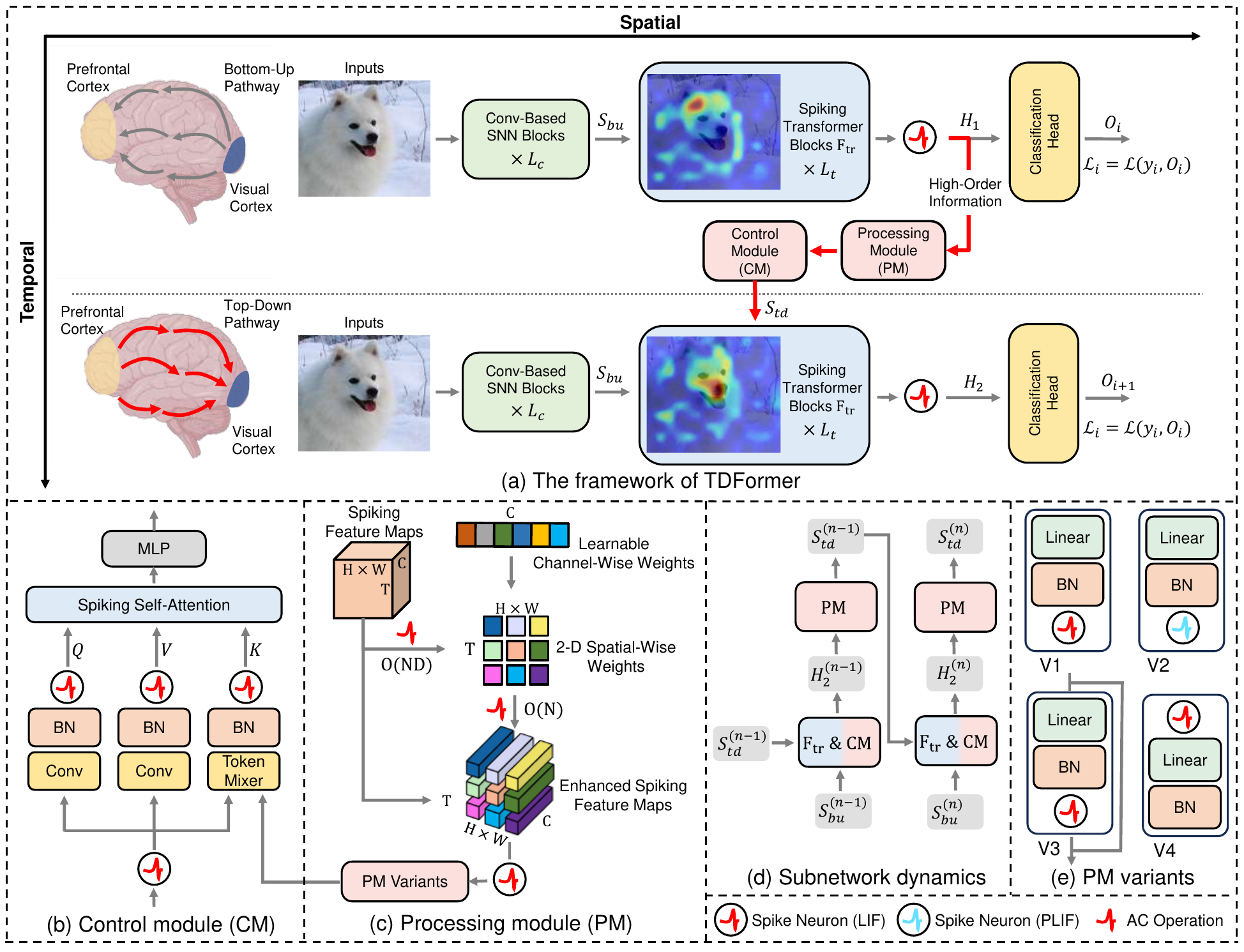}\\
  \caption{Overview of the TDFormer architecture. (a) Overall design inspired by top-down pathways in the brain, mimicking feedback from the prefrontal cortex to the visual cortex for temporal modulation in SNNs; (b) and (c) Detailed structures of the processing and control modules; (d) Information flow within the subnetwork, highlighting processing of feedback signals; (e) Four processing module variants, labeled v1–v4.}
  \label{architecture}
\end{figure*}

\subsection{TDFormer Architecture}
This work is based on three backbones: SpikformerV1 \cite{zhou2023spikformer}, Spike-driven TransformerV1 \cite{yao2023spikedriven} and QKformer \cite{zhou2024qkformer}. These can be summarized into a unified structure, as shown in Figure \ref{architecture}, which consists of $ L_c $ Conv-based SNN blocks, $ L_t $ Transformer-based SNN blocks, and a classification head ($ \mathrm{CH} $). Additionally, the Transformer-based SNN blocks incorporate spike-based self-attention modules and Multi-Layer Perceptron (MLP) modules.

Apart from the backbone structure, the TDFormer architecture specifically introduces a top-down pathway called TDAC that includes two modules: the control module (CM) and the processing module (PM), as shown in Figure \ref{architecture}.

Viewing traditional SNNs as a sequence of $T=1$ sub-networks with shared parameters and temporal dynamics governed by membrane potentials, we propose two approaches to introducing the top-down pathway. The first adds recurrent feedback connections between these fine-grained $T=1$ sub-networks, enabling temporal context to propagate backward through time. The second adopts a coarser temporal resolution by dividing a sequence (e.g., $T=4$) into fewer segments (e.g., two $T=2$ blocks). Importantly, the additional power overhead introduced by both schemes remains minimal. Detailed analysis of power consumption is provided in appendix \ref{energy}. Both approaches can be expressed in the following unified formulation:
\begin{align}
H_1 &= \mathrm{F}_{\mathrm{tr}}\left(\mathrm{CM}\left(S_{bu}^{(1)}, \varnothing\right)\right)
    \hspace{1.3cm} H_1 \in \{0, 1\}^{T \times N \times C}, S^{(1)}_{bu}\in \{0, 1\}^{T \times H \times W\times C} \\
S_{td}^{(1)} &= \mathrm{PM}(H_1) 
    \hspace{3.1cm} S_{td}^{(1)} \in \{0, 1\}^{T \times N \times C},H_1\in \{0, 1\}^{T \times N \times C}\\
H_n &= \mathrm{F}_{\mathrm{tr}}\left(\mathrm{CM}\left(S_{bu}^{(n)}, S_{td}^{(n-1)}\right)\right) 
    \hspace{0.5cm} S^{(n)}_{bu}\in \{0, 1\}^{T \times H \times W\times C},n=1 \ldots N \\
S_{td}^{(n)} &= \mathrm{PM}(H_n) 
    \hspace{3.05cm} S_{td}^{(n)} \in \{0, 1\}^{T \times N \times C}, n=1 \ldots N \\
O_n &= \mathrm{CH}(H_n) 
    \hspace{3.1cm} O_n \in \{0, 1\}^{T \times L},H_n\in \{0, 1\}^{T \times N \times C},n=1 \ldots N   
\end{align}
In the above formulation, $S_{bu}^{(n)}$ denotes the bottom-up input at time step $n$, while $S_{td}^{(n-1)}$ represents the top-down feedback from the previous step. $\mathrm{CM}$ is a control module that integrates bottom-up and top-down signals, and $\mathrm{F_{tr}}$ denotes the Transformer-based processing unit. The processing module $\mathrm{PM}$ generates the current feedback signal $S_{td}^{(n)}$ from the high-level representation $H_n$, and $\mathrm{CH}$ maps $H_n$ to the final output $O_n$, where $N$ denotes the number of sub-networks. The bottom part of Figure \ref{architecture} illustrates the feedback information flow between sub-networks.

\textbf{For the control module (CM),} CM derives the query $Q$, key $K$, and value $V$ vectors from the bottom-up information $S_{bu}$ and the top-down information $S_{td}$. In more detail, $S_{td}$ facilitates attention correction by controlling the attention map. The CM can be formulated as follows:
\begin{align}
\label{CMa}
Q, K, V &= CM(S_{bu}, S_{td}),\\
K &= \mathcal{SN}(\mathrm{BN}(\mathrm{TokenMix}\left((S_{bu}, S_{td})\right))), \\
Q &= \mathcal{SN}(\mathrm{BN}(\mathrm{Linear}(S_{bu}))),  V = \mathcal{SN}(\mathrm{BN}(\mathrm{Linear}(S_{bu}))).
\end{align}
We choose concatenation along the channel dimension as the default token mixer, which allows us to combine the features of the current time step with those from previous time steps, and use the fused information to dynamically adjust the self-attention map. After passing through the CM, the query $ Q $, key $ K $ and value $ V $ vectors are fed into the self-attention module to obtain the top-down attention map. To prevent the fusion of top-down information from altering the distribution of $ K $ in the self-attention computation, we first normalize the combined features, and then apply spike discretization before computing self-attention. Ablation studies on different CM variants are provided in the appendix \ref{CM}.

\textbf{The processing module (PM)} PM includes both channel-wise token mixer and spatial-wise token mixer \cite{yu2022metaformer}. 
The feature enhancement component enhances the original spiking feature maps $ \mathbf{X} $ by learning channel-wise $\mathbf{W}_c$ and computing spatial-wise attention maps $\mathbf{M}_{\text{spatial}}$. This attention mechanism requires very few parameters and has a time complexity of $O(ND)$. This operation is represented as:
\begin{align}
\label{fe}
\mathbf{M}_{\text{spatial}}(t, n) &= \sum_{c=1}^{C} \mathbf{W}_c \cdot \mathbf{X}_{t, n, c}, \\
\mathbf{M}_{\text{spatial}} &= \text{clamp}\left( \mathbf{M}_{\text{spatial}}, b, a \right).
\end{align}
where $\mathbf{X}_{t,n,c}$ represents the spiking activation at time $t$, spatial position $n$ (corresponding to the 2D coordinate $(h, w)$ in the feature map), and channel $c$. Here, $a$ and $b$ are hyperparameters. We theoretically derive their effects on the PM output, and the details are given in appendix \ref{hyper}. The spatial attention map $ \mathbf{M}_{\text{spatial}} $ weights the spiking feature map $ \mathbf{X} $ via element-wise multiplication, with broadcasting over the channel dimension:
\begin{equation}
\label{attention}
\mathbf{O} = \mathcal{SN}(\mathbf{X} \odot \mathbf{M}_{\text{spatial}}).
\end{equation}
The attention embedding spaces are different across layers, and we aim to use a PM variants to align the top-down information with the embedding spaces of different layers. We explored four PM variants that serve as the channel-wise token mixer, which are illustrated in Figure \ref{architecture}.

We introduce a clamp operation in the attention module to enforce a strict upper bound on the variance of the attention map which is formally stated in Proposition \ref{4.1}. Excessive variance can lead to gradient vanishing, as gradients in spiking neurons are only generated near the firing threshold of the membrane potential. Outside this narrow region, the gradient tends to vanish. Furthermore, high variance may introduce outliers, resulting in significant quantization errors during spike generation.  The effect of the clamp operation on the gradient is shown in the Figure appendix \ref{C2}.

\begin{proposition}
\label{4.1}
The upper bound $ \overline{\text{Var}}(Y_{tnc}) $ for the $\mathbf{X} \odot \mathbf{M}_{\text{spatial}}$ is given as follows:
\begin{equation}
\overline{\text{Var}}(Y_{tnc}) =
\begin{cases}
a^2(f^2 - f + \tfrac{1}{2}) + ab(1 - 2f) + \tfrac{b^2}{2}, & \text{if } 0 \leq f \leq \frac{a + b}{2a}, \\[7pt]
\frac{a^2 + 2ab + b^2 - 4fab}{4}, & \text{if } \frac{a + b}{2a} \leq f \leq 1,
\end{cases}
\end{equation}
where we assume each $\mathbf{X}_{t,n,c}$ is independent random variable $ X_{tnc} \sim \text{Bernoulli}(f) $, with $f$ as the firing rate.
\end{proposition}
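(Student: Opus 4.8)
The plan is to reduce the variance of the gated pre-activation $Y_{tnc}=X_{tnc}\,\mathbf{M}_{\text{spatial}}(t,n)$ to a moment problem for the clamped gate $M:=\mathbf{M}_{\text{spatial}}(t,n)$, which by construction of the clamp satisfies $M\in[b,a]$, and then to maximize over the admissible law of $M$. First I would exploit the Bernoulli structure: since $X_{tnc}\in\{0,1\}$ we have $X_{tnc}^2=X_{tnc}$, so conditioning on whether the neuron fires gives $\text{Var}(Y_{tnc}) = f\,\mathbb{E}[M^2\mid X_{tnc}=1] - f^2\big(\mathbb{E}[M\mid X_{tnc}=1]\big)^2$, equivalently $f\sigma^2+f(1-f)\mu^2$ with $\mu,\sigma^2$ the firing-conditional mean and variance of $M$. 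This isolates all the remaining randomness into the first two moments of a single $[b,a]$-valued variable and makes the firing rate $f$ the only external parameter.

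Next I would bound these moments uniformly over the weights $\mathbf{W}_c$ and the channel count $C$, so that the final bound depends only on $(a,b,f)$. The key structural fact is extremal: among all distributions supported on $[b,a]$ with a prescribed mean $\mu$, the second moment $\mathbb{E}[M^2]$ is maximized by the two-point law on the endpoints $\{b,a\}$, for which $\mathbb{E}[M^2]=\mu(a+b)-ab$. Substituting this worst case collapses the objective to an explicit concave quadratic in the single scalar $\mu$, namely $G(\mu)=f\big(\mu(a+b)-ab\big)-f^2\mu^2$.

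I would then maximize $G$ over the feasible range $\mu\in[b,a]$. Its unconstrained stationary point is $\mu^{\star}=\tfrac{a+b}{2f}$, which lies inside $[b,a]$ precisely when $f\ge\tfrac{a+b}{2a}$; evaluating $G(\mu^{\star})$ returns the interior branch $\tfrac{(a+b)^2-4fab}{4}$, i.e.\ the second case. When $f<\tfrac{a+b}{2a}$ the maximizer saturates at the boundary of the admissible interval and the gate law degenerates toward the endpoint, so the bound becomes the quadratic-in-$f$ expression of the first case. I would close the argument by checking the two branches agree in value and first derivative at the breakpoint $f=\tfrac{a+b}{2a}$ (both yield value $\tfrac{a^2-b^2}{4}$ and slope $-ab$), which confirms the piecewise expression is the correct envelope of the maximization rather than two unrelated formulas.

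The main obstacle is not the optimization but justifying the reduction cleanly. The subtlety is that $X_{tnc}$ itself appears inside the sum $\sum_c \mathbf{W}_c \mathbf{X}_{t,n,c}$ defining $M$, so $X_{tnc}$ and $M$ are \emph{not} independent; the conditioning step must therefore be carried out on the genuine firing-conditional law of $M$, and the endpoint/moment-problem bound must be shown to survive both this dependence and the clamp nonlinearity. The clamp is precisely what makes the scheme work: it forces $M\in[b,a]$ so the moment problem lives on a compact interval and the supremum is attained, which is exactly why a finite, $\mathbf{W}$- and $C$-independent bound $\overline{\text{Var}}(Y_{tnc})$ exists at all. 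I would finish by verifying that the clamp-saturated (boundary) regime and the interior regime are handled consistently, so that no admissible gate configuration is omitted from the maximization.
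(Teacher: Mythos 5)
Your reduction is essentially the paper's: condition on $X_{tnc}$ via the law of total variance to get $\text{Var}(Y_{tnc}) = f\sigma^2 + f(1-f)\mu^2$ with $\mu,\sigma^2$ the firing-conditional moments of the clamped gate, bound the conditional second moment by the endpoint extremal fact $\mathbb{E}[M^2]\le (a+b)\mu - ab$ (the paper writes this as $(X_{tnc}M_{tn}-b)(a-X_{tnc}M_{tn})\ge 0$), and maximize $G(\mu)=f\bigl((a+b)\mu-ab\bigr)-f^2\mu^2$ over $\mu\in[b,a]$, splitting on whether the vertex $\mu^\star=\frac{a+b}{2f}$ lies inside. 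Your interior branch is correct and matches the paper's Case 1, including the attaining two-point law on $\{a,b\}$.

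The gap is in the boundary branch. When $f\le\frac{a+b}{2a}$ the saturated maximizer is $\mu=a$ (i.e.\ $M=a$ conditionally on firing), and evaluating gives $G(a)=a^2f(1-f)$ --- \emph{not} the stated $a^2(f^2-f+\tfrac12)+ab(1-2f)+\tfrac{b^2}{2}$. The two expressions differ by $\tfrac12\bigl(a(1-2f)+b\bigr)^2\ge 0$, which vanishes exactly at the breakpoint $f=\frac{a+b}{2a}$; this is why your value-and-slope check there passes (both branches give $\frac{a^2-b^2}{4}$ and slope $-ab$) without detecting the mismatch --- the stated convex branch is merely tangent from above to the linear branch at that point. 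So your argument, carried out honestly, proves a strictly \emph{tighter} bound $a^2f(1-f)$ in the small-$f$ regime; to recover the proposition as stated you must add the easy domination step $a^2f(1-f)\le a^2(f^2-f+\tfrac12)+ab(1-2f)+\tfrac{b^2}{2}$, and you cannot claim the stated branch is attained with equality. Notably, the paper's own proof commits the identical slip: it asserts that substituting the deterministic $M_{tn}=a$ into the variance formula ``simplifies to'' the stated branch-1 expression, when the substitution in fact yields $a^2f(1-f)$. One point in your favor: you correctly flag that $X_{tnc}$ appears inside the sum defining $M_{tn}$, so the conditioning must use the genuine firing-conditional law of $M$; since that conditional law is still supported on $[b,a]$, your moment bound survives the dependence, whereas the paper simply assumes independence of $M_{tn}$ up front and defers the justification to an asymptotic-independence lemma (its Lemma \ref{lem:B.4}).
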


Additionally, the clamp operation eliminates the need for scaling operations in attention mechanisms (e.g., QK product scaling), simplifying computations, reducing complexity, and improving energy efficiency in hardware implementations. The detailed proofs of this proposition are provided in appendix \ref{pmbound}.

\subsection{Loss Function}
The loss of the TDFormer can be formulated as follows:
\begin{align}
\mathcal{L}_{\text{TDFormer}} = \sum_{n=1}^N \alpha_n \mathcal{L}(y, O_n), \quad \sum_{n=1}^N \alpha_n = 1, \quad 0 \leq \alpha_n \leq 1.
\end{align}
Here, $\alpha_n$ are hyperparameters. To maintain the overall loss scale, we apply a weighted average over the losses from all $N$ stages, assigning a larger weight to the final output loss. This is because we believe that the receptive field in the temporal dimension increases as time progresses. Since the earlier stages lack feedback from future steps, their outputs are less accurate and thus subject to weaker supervision. By contrast, the final stage benefits from a larger temporal receptive field due to feedback, making its output more reliable. Therefore, during testing, only the output from the last sub-network is used for evaluation.

\begin{table*}[t!]
\caption{Comparison with the baseline and previous work on ImageNet. The result in bold indicates superior performance compared to the baseline. SOTA is marked with *, previous SOTA with \#. The default PM variant is v1.}
\label{imagenet}
\setlength{\tabcolsep}{+1pt}
\centering
\begin{tabular}{ccccccc}
\toprule
\multirow{3}{*}{Methods} & \multirow{3}{*}{Spike} & \multirow{3}{*}{Architecture} & \multicolumn{4}{c}{ImageNet} \\ \cmidrule(l){4-7} 
 &  &  & \begin{tabular}[c]{@{}c@{}}Time \\ Step\end{tabular} & Power (mJ) & Param (M) & Acc (\%) \\ \midrule
ViT \cite{dosovitskiy2020image} & \ding{55} & ViT-B/16($384^2$) & 1 & 254.84 & 86.59 & 77.90 \\ \cmidrule(l){2-7} 
DeiT \cite{touvron2021training} & \ding{55} & DeiT-B($384^2$) & 1 & 254.84 & 86.59 & 83.10 \\ \cmidrule(l){2-7} 
Swin \cite{liu2021swin} & \ding{55} & Swin Transformer-B($384^2$) & 1 & 216.20 & 87.77 & 84.50 \\ \cmidrule(l){2-7} 
Spikingformer \cite{zhou2023spikingformer} & \ding{51} & Spikingformer-8-768 & 4 & 13.68 & 66.34 & 75.85 \\ \cmidrule(l){2-7} 
\multirow{2}{*}{SpikformerV1 \cite{zhou2023spikformer}} & \ding{51} & Spikformer-8-512 & 4 & 11.58 & 29.68 & 73.38 \\
 & \ding{51} & Spikformer-8-768 & 4 & 21.48 & 66.34 & 74.81 \\ \cmidrule(l){2-7} 
\multirow{2}{*}{SDTV2 \cite{yao2024spikedriven}} & \ding{51} & Meta-SpikeFormer-8-384 & 4 & 32.80 & 31.30 & 77.20 \\
 & \ding{51} & Meta-SpikeFormer-8-512 & 4 & 52.40 & 55.40 & 80.00 \\ \cmidrule(l){2-7}  
\multirow{3}{*}{E-Spikeformer \cite{yao2025scaling}} & \ding{51} & E-Spikeformer & 8 & 30.90 & 83.00 & 84.00 \\
 & \ding{51} & E-Spikeformer & 8 & 54.70 & 173.00 & 85.10 \\
 & \ding{51} & E-Spikeformer & 8 & - & 173.00 & 86.20 \# \\ \cmidrule(l){2-7}
 \multirow{3}{*}{QKFormer \cite{zhou2024qkformer}} & \ding{51} & HST-10-768 ($224^2$) & 4 & 38.91 & 64.96 & 84.22 \\
 & \ding{51} & HST-10-768 ($288^2$) & 4 & 64.27 & 64.96 & 85.20 \\
 & \ding{51} & HST-10-768 ($384^2$) & 4 & 113.64 & 64.96 & 85.65 \\
 \midrule
\multirow{5}{*}{TDFormer} & \ding{51} & HST-10-768 ($224^2$) & 4 & 38.93 & 65.55 & \textbf{85.37(+1.15)} \\
& \ding{51} & HST-10-768 ($288^2$) & 4 & 64.39 & 65.55 & \textbf{86.29(+1.09)} \\
 & \ding{51} & HST-10-768 ($224^2$) & 4 & 39.10 & 69.09 & \textbf{85.57(+1.35)} \\
 & \ding{51} & HST-10-768 ($288^2$) & 4 & 64.45 & 69.09 & \textbf{86.43 (+1.23)} \\
 & \ding{51} & HST-10-768 ($384^2$) & 4 & 113.79 & 69.09 & \textbf{86.83 (+1.18)*} \\ \bottomrule
\end{tabular}
\end{table*}

\subsection{Top-down feedback enhances temporal dependency}
Top-down feedback enhances temporal dependency from two perspectives. First, from the forward propagation perspective, we compute the mutual information matrix between features at different time steps, as shown in Figure \ref{vis1}. Second, from the backward propagation perspective, we demonstrate that introducing top-down feedback helps alleviate the problem of vanishing gradients along the temporal dimension. We present the following theorem:
\begin{definition}
 \( \epsilon^l(t) \) is defined as the sensitivity of the membrane potential \( \mathbf{H}^l(t+1) \) to its previous state \( \mathbf{H}^l(t) \), and is computed as:
\begin{equation}
\epsilon^{l}(t) \equiv\frac{\partial \mathbf{H}^{l}(t+1)}{\partial \mathbf{H}^{l}(t)}+\frac{\partial \mathbf{H}^{l}(t+1)}{\partial \mathbf{S}^{l}(t)} \frac{\partial \mathbf{S}^{l}(t)}{\partial \mathbf{H}^{l}(t)}, 
\end{equation}
where $l$ indexes the layer.
\end{definition}

\begin{theorem}
We adopt the rectangular function as the surrogate gradient, following the setting used in previous studies\cite{ding2025rethinking,meng2023towards,huang2024clif}.
For a conventional SNN, the sensitivity of the membrane potential is expressed as follows:
\begin{equation}
\label{23}
\epsilon^{l}(t)_{jj}=\left\{\begin{array}{l}
0, \quad \frac{1}{2} \vartheta<H_{j}^{l}(t)< \frac{3}{2} \vartheta, \\
1-\frac{1}{\tau}, \quad \text { otherwise } .
\end{array}\right.
\end{equation}
For SNN with top-down feedback structure, the sensitivity of the membrane potential can be expressed as:
\begin{equation}
\label{24}
\epsilon^{l}(t)_{jj}=\left\{\begin{array}{l}
\frac{\partial{\varphi_\theta(\mathbf{S}^l(t))}}{\partial{\mathbf{S}^l(t)}}, \quad \frac{1}{2} \vartheta<H_{j}^{l}(t)< \frac{3}{2} \vartheta, \\
1-\frac{1}{\tau}, \quad \text { otherwise } .
\end{array}\right.
\end{equation}
where $\vartheta$ is the spike threshold, $\tau$ is a time constant and $\varphi_\theta$ is a differentiable feedback function parameterized by $\theta$.
\end{theorem}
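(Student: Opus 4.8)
The plan is to start from the LIF recurrence in Equations~\eqref{lif1} and express $\mathbf{H}^{l}(t+1)$ explicitly as a function of the two upstream variables that appear in the sensitivity definition, namely $\mathbf{H}^{l}(t)$ and $\mathbf{S}^{l}(t)$. Substituting the reset rule $V[t]=H[t](1-S[t])+V_{\text{reset}}S[t]$ into the integration step and collecting the decay factor gives the closed form $\mathbf{H}^{l}(t+1)=(1-\tfrac{1}{\tau})\bigl(\mathbf{H}^{l}(t)(1-\mathbf{S}^{l}(t))+V_{\text{reset}}\mathbf{S}^{l}(t)\bigr)+\tfrac{1}{\tau}\mathbf{X}^{l}(t+1)$. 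From this I would read off the three Jacobian blocks entering the definition: the direct term $\partial \mathbf{H}^{l}(t+1)/\partial \mathbf{H}^{l}(t)=(1-\tfrac{1}{\tau})(1-\mathbf{S}^{l}(t))$, the spike-path prefactor $\partial \mathbf{H}^{l}(t+1)/\partial \mathbf{S}^{l}(t)=(1-\tfrac{1}{\tau})(V_{\text{reset}}-\mathbf{H}^{l}(t))$, and the surrogate factor $\partial \mathbf{S}^{l}(t)/\partial \mathbf{H}^{l}(t)$. Since only diagonal entries couple a neuron to itself across time, it suffices to track the scalar $jj$ component.

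Next I would insert the rectangular surrogate gradient, whose support is exactly the window $\tfrac{1}{2}\vartheta<H^{l}_{j}(t)<\tfrac{3}{2}\vartheta$ and which vanishes elsewhere. Outside this window $\partial S/\partial H=0$, so the spike path drops out and $\epsilon^{l}(t)_{jj}$ reduces to the direct term; invoking the low-firing regime where the neuron does not spike ($\mathbf{S}^{l}(t)=0$) collapses it to $1-\tfrac{1}{\tau}$, giving the ``otherwise'' branch. Inside the window the surrogate factor is active, and the central computation is to show that the reset-mediated spike path offsets the direct propagation term, so that the two contributions cancel and $\epsilon^{l}(t)_{jj}=0$ for the conventional neuron. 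This cancellation is precisely the mechanism the paper identifies as the source of temporal gradient vanishing.

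Finally, for the top-down architecture I would repeat the expansion with the feedback contribution appended, so that $\mathbf{H}^{l}(t+1)$ acquires an extra additive dependence $\varphi_\theta(\mathbf{S}^{l}(t))$ on the spike. The only block that changes is the spike-path prefactor, which now carries the additional summand $\partial \varphi_\theta(\mathbf{S}^{l}(t))/\partial \mathbf{S}^{l}(t)$. Reusing the in-window cancellation of the conventional terms leaves exactly the feedback derivative surviving, yielding $\epsilon^{l}(t)_{jj}=\partial\varphi_\theta(\mathbf{S}^{l}(t))/\partial\mathbf{S}^{l}(t)$, while the out-of-window branch is unchanged at $1-\tfrac{1}{\tau}$ because $\varphi_\theta$ enters only through the now-inactive spike path.

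The step I expect to be the main obstacle is the in-window cancellation in the conventional case: it requires pinning down the exact height and normalization convention of the rectangular surrogate together with the treatment of the hard reset, so that the direct term and the reset-weighted surrogate term genuinely sum to zero rather than to a small residual. I would resolve this by fixing the surrogate normalization consistently with the cited works and, if a residual persists, arguing that it is negligible in the operating regime. Once this is settled, the feedback case follows immediately, since $\varphi_\theta$ supplies an independent additive channel that bypasses the cancelling reset dynamics.
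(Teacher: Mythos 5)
Your starting point is the wrong neuron model for this theorem, and that is where the argument breaks. You expand the hard-reset LIF of the main text, obtaining the spike-dependent direct term $(1-\tfrac{1}{\tau})(1-\mathbf{S}^{l}(t))$ and the state-dependent spike-path prefactor $(1-\tfrac{1}{\tau})(V_{\text{reset}}-\mathbf{H}^{l}(t))$. The paper's proof (Appendix~\ref{Gradient}) instead analyzes the reset-by-subtraction dynamics used in the cited works,
\begin{equation*}
\mathbf{H}^{l}(t+1)=\Bigl(1-\frac{1}{\tau}\Bigr)\bigl(\mathbf{H}^{l}(t)-\vartheta\,\mathbf{S}^{l}(t)\bigr)+\mathbf{W}^{l}\mathbf{S}^{l-1}(t+1),
\end{equation*}
under which both Jacobian blocks are \emph{constants}: $\partial\mathbf{H}^{l}(t+1)/\partial\mathbf{H}^{l}(t)=1-\tfrac{1}{\tau}$ unconditionally, and $\partial\mathbf{H}^{l}(t+1)/\partial\mathbf{S}^{l}(t)=-(1-\tfrac{1}{\tau})\vartheta$. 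With the rectangular surrogate of height $1/\vartheta$ supported on $(\tfrac{1}{2}\vartheta,\tfrac{3}{2}\vartheta)$, the in-window sum is exactly $(1-\tfrac{1}{\tau})(1-\vartheta\cdot\tfrac{1}{\vartheta})=0$, and the otherwise branch is exactly $1-\tfrac{1}{\tau}$ with no assumption on firing. Under your hard-reset expansion neither branch comes out: inside the window the sum is $(1-\tfrac{1}{\tau})\bigl((1-S)+\tfrac{V_{\text{reset}}-H}{\vartheta}\bigr)$, e.g.\ $(1-\tfrac{1}{\tau})\tfrac{\vartheta-H}{\vartheta}$ for $H<\vartheta$ with $V_{\text{reset}}=0$, and $-(1-\tfrac{1}{\tau})\tfrac{H}{\vartheta}$ for $H>\vartheta$ --- residuals that no normalization convention for a constant-height rectangle can annihilate for all $H$ simultaneously, since exact cancellation would require the surrogate height to vary as $1/(H-V_{\text{reset}})$. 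Moreover, outside the window with $H\geq\tfrac{3}{2}\vartheta$ the neuron has fired, so your direct term $(1-\tfrac{1}{\tau})(1-S)$ equals $0$, not $1-\tfrac{1}{\tau}$; this is why you were forced into the ad hoc ``low-firing regime'' patch, which the paper never needs. The obstacle you flag at the end is therefore not a normalization detail to be settled later: it is the signal that the derivation must be carried out for the soft-reset dynamics, for which the theorem is an exact identity rather than an approximation.

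Your treatment of the feedback case otherwise matches the paper's: appending the additive channel $\varphi_\theta(\mathbf{S}^{l}(t))$ to the recurrence contributes the extra term $\tfrac{\partial\varphi_\theta(\mathbf{S}^{l}(t))}{\partial\mathbf{S}^{l}(t)}\cdot\tfrac{\partial\mathbf{S}^{l}(t)}{\partial\mathbf{H}^{l}(t)}$ to $\epsilon^{l}(t)$, which survives exactly where the surrogate is active while the baseline terms cancel (both you and the paper implicitly absorb the factor $\partial\mathbf{S}/\partial\mathbf{H}=1/\vartheta$ under the usual $\vartheta=1$ convention). But since this step inherits the in-window cancellation, it only stands once the soft-reset computation above replaces your hard-reset one.
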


According to Equation \ref{23}, $\epsilon^l(t)$ becomes zero within an easily-reached interval, and outside that interval, it is upper-bounded by a small value $1 - \frac{1}{\tau}$, since $\tau$ is typically close to 1 in practice\cite{deng2022temporal,zheng2021going,guo2022recdis,meng2023towards}. In contrast, our method allows non-zero gradients within this interval, and the $\frac{\partial{\varphi_\theta(\mathbf{S}^l(t))}}{\partial{\mathbf{S}^l(t)}}$ can exceed $1 - \frac{1}{\tau}$. This property helps to alleviate the vanishing gradient problem along the temporal dimension. The detailed proof is provided in the appendix \ref{Gradient}.

\section{Experiments}
We evaluate our models on several datasets: CIFAR-10 \cite{Krizhevsky2009LearningML}, CIFAR-100 \cite{Krizhevsky2009LearningML}, CIFAR10-DVS \cite{li2017cifar10}, DVS128 Gesture \cite{amir2017low}, ImageNet \cite{deng2009imagenet}, CIFAR-10C \cite{hendrycks2019benchmarking} and ImageNet-C \cite{hendrycks2019benchmarking}. For the smaller datasets, we employ the feedback pathway on SpikformerV1 \cite{zhou2023spikformer} , Spike-driven TransformerV1 \cite{yao2023spikedriven} and QKformer\cite{zhou2024qkformer}, experimenting with different configurations tailored to each dataset. For the large-scale datasets, we utilize QKformer\cite{zhou2024qkformer} as baselines. Specific implementation details are provided in appendix \ref{a1}.

\begin{table*}[t!]
\caption{Comparison with the baselines and previous work on static datasets: CIFAR-10 and CIFAR-100. Conventions align with those in Table \ref{imagenet}. The default PM variant is v1.}
\label{table1}
\renewcommand{\arraystretch}{0.9}  
\centering
\begin{tabular}{cccccc}
\toprule
\multirow{3}{*}{\begin{tabular}[c]{@{}c@{}}Methods\\ {[}Architecture{]}\end{tabular}} & \multirow{3}{*}{\begin{tabular}[c]{@{}c@{}}Time\\ Step\end{tabular}} & CIFAR-10 & CIFAR-100 \\ \cmidrule(l){3-4} 
 &  & \begin{tabular}[c]{@{}c@{}}Acc\\ (\%)\end{tabular} & \begin{tabular}[c]{@{}c@{}}Acc\\ (\%)\end{tabular} \\ \midrule
STBP-tdBN \cite{zheng2021going} {[}ResNet-19{]} & 4 & 92.92 & 70.86 \\ \cmidrule(l){2-4} 
TET \cite{deng2022temporal} {[}ResNet-19{]} & 4 & 94.44 & 74.47 \\ \cmidrule(l){2-4} 
\begin{tabular}[c]{@{}c@{}}SDTV1\cite{yao2023spikedriven}[SDT-2-512]\end{tabular} & 4 & 95.60 & 78.40 \\ \cmidrule(l){2-4} 
QKformer \cite{zhou2024qkformer} {[}HST-4-384{]} & 4 & 96.18 \# & 81.15 \# \\ \cmidrule(l){2-4} 
\multirow{2}{*}{\begin{tabular}[c]{@{}c@{}}SpikformerV1 \cite{zhou2023spikformer} {[}Spikformer-4-384{]}\end{tabular}}  
 & 2 & 93.59 & 76.28 \\
 & 4 & 95.19 & 77.86 \\ \midrule
\multirow{2}{*}{\begin{tabular}[c]{@{}c@{}}SpikformerV1(ours)[Spikformer-4-384]\end{tabular}} 
 & 2 & 93.65 & 75.29 \\
 & 4 & 94.73 & 77.88 \\ \cmidrule(l){2-4} 
\multirow{2}{*}{\begin{tabular}[c]{@{}c@{}}TDFormer{[}Spikformer-4-384{]}\end{tabular}} 
 & 2 & \textbf{94.17 (+0.52)} & \textbf{75.79 (+0.50)} \\
 & 4 & \textbf{95.11 (+0.38)} & \textbf{77.99 (+0.11)} \\ \cmidrule(l){2-4} 
\begin{tabular}[c]{@{}c@{}}SDTV1(ours)[SDT-2-256]\end{tabular} & 4 & 94.47 & 76.05 \\
\begin{tabular}[c]{@{}c@{}}SDTV1(ours)[SDT-2-512]\end{tabular} & 4 & 95.78 & 79.15 \\ \cmidrule(l){2-4} 
\begin{tabular}[c]{@{}c@{}}TDFormer[SDT-2-256]\end{tabular} & 4 & \textbf{94.61 (+0.14)} & \textbf{76.23 (+0.18)} \\
\begin{tabular}[c]{@{}c@{}}TDFormer[SDT-2-512]\end{tabular} & 4 & \textbf{96.07 (+0.29)} & \textbf{79.67 (+0.52)} \\ \cmidrule(l){2-4} 
TDFormer {[}HST-4-384{]} & 4 & \textbf{96.51 (+0.33)*} & \textbf{81.45 (+0.30)*} \\ \bottomrule
\end{tabular}
\end{table*}

\begin{table*}[t!]
\caption{Comparison with the baselines and previous work on the Neuromorphic Dataset. Conventions align with those in Table \ref{imagenet}. The default PM variant is v1.}
\label{table2}
\renewcommand{\arraystretch}{0.9}  
\centering
\begin{tabular}{cccccc}
\toprule
\multirow{3}{*}{Methods {[}Architecture{]}} & \multicolumn{2}{c}{CIFAR10-DVS} & \multicolumn{2}{c}{DVS128 Gesture} \\ \cmidrule(l){2-5} 
 & \begin{tabular}[c]{@{}c@{}}Time \\ Step\end{tabular} & \begin{tabular}[c]{@{}c@{}}Acc\\ (\%)\end{tabular} & \begin{tabular}[c]{@{}c@{}}Time \\ Step\end{tabular} & \begin{tabular}[c]{@{}c@{}}Acc\\ (\%)\end{tabular} \\ \midrule
STBP-tdBN \cite{zheng2021going} {[}ResNet-19{]} & 10 & 67.80 & 40 & 96.90 \\ \cmidrule(l){2-5} 
DSR \cite{meng2022training} {[}VGG-11{]} & 10 & 77.30 & - & - \\ \cmidrule(l){2-5} 
SDTV1 \cite{yao2023spikedriven}{[}SDT-2-256{]} & 16 & 80.00 & 16 & 99.30 \# \\ \cmidrule(l){2-5} 
\multirow{2}{*}{SpikformerV1 \cite{zhou2023spikformer} {[}Spikformer-2-256{]}} & 10 & 78.90 & 10 & 96.90 \\
 & 16 & 80.90 & 16 & 98.30 \\ \cmidrule(l){2-5} 
\multirow{2}{*}{Spikingformer \cite{zhou2023spikingformer} {[}Spikingformer-2-256{]}} & 10 & 79.90 & 10 & 96.20 \\
 & 16 & 81.30 & 16 & 98.30 \\ \cmidrule(l){2-5} 
Qkformer \cite{zhou2024qkformer} {[}HST-2-256{]} & 16 & 84.00 \# & 16 & 98.60 \\ \midrule
\multirow{2}{*}{SpikformerV1(ours) {[}Spikformer-2-256{]}} & 10 & 78.08 & - & - \\
 & 16 & 79.40 & - & - \\ \cmidrule(l){2-5} 
\multirow{2}{*}{TDFormer {[}Spikformer-2-256{]}} & 10 & \textbf{78.90 (+0.82)} & - & - \\
 & 16 & \textbf{81.70 (+2.30)} & - & - \\ \cmidrule(l){2-5} 
\multirow{2}{*}{SDTV1(ours) {[}SDT-2-256{]}} & 10 & 75.22 & 10 & 96.79 \\
 & 16 & 77.07 & 16 & 97.98 \\ \cmidrule(l){2-5} 
\multirow{2}{*}{TDFormer{[}SDT-2-256{]}} & 10 & 75.05 (-0.17) & 10 & \textbf{96.92 (+0.13)} \\
 & 16 & \textbf{77.45 (+0.38)} & 16 & \textbf{99.65 (+1.67)*} \\
TDFormer{[}HST-2-256{]} & 16 & \textbf{85.83 (+1.83)*} & 16 & \textbf{98.96 (+0.36)} \\ 
\bottomrule
\end{tabular}
\end{table*}

\subsection{Experiments on ImageNet}
Table \ref{imagenet} presents the results for the large-scale dataset ImageNet. The incorporation of top-down feedback structure has demonstrated significant improvements on E-spikformer, which is the previous SOTA model of SNNs. Notably, compared to QKFormer, increasing the model size by merely 0.02 million parameters and 0.59 millijoules of power consumption leads to a significant gain of 1.15\% in top-1 accuracy on the ImageNet dataset. Our model sets a new SOTA performance in the SNN field. This milestone lays a solid foundation for advancing SNNs toward large-scale networks, further bridging the gap between SNNs and traditional deep learning models. Furthermore, we calculate the power of TDFormer following the method in \cite{zhou2024qkformer}, as detailed in Table \ref{imagenet}. TDFormer results in a slight increase in energy consumption due to the feedback structure, but it achieves superior performance with minimal additional power usage. The detailed calculation of power consumption is provided in the appendix \ref{energy}.

\subsection{Experiments on Neuromorphic and CIFAR Datasets}
Table \ref{table2} presents the results for the neuromorphic datasets CIFAR10-DVS and DVS128 Gesture. Our proposed TDFormer consistently outperforms the baselines across all experiments, except for the Spiking Transformer-2-256 at a time step of 10. Furthermore, we achieve SOTA results, with an accuracy of 85.83\% on CIFAR10-DVS using the HST-2-256 (V1), marking a notable improvement of 1.83\% compared to the previous SOTA model, QKformer. We also achieve 99.65\% accuracy on DVS128 Gesture using the Spiking Transformer-2-256 (V1) at 16 time steps.

In addition, the results for the static datasets CIFAR-10 and CIFAR-100 are summarized in Table \ref{table1}. Compared to the baselines, the proposed TDFormer consistently demonstrates significant performance improvements across all experiments, with the exception of Spikformer-4-384 (V1) at time step 6. Furthermore, we achieve the SOTA performance, attaining 96.51\% accuracy on CIFAR-10 and 81.45\% on CIFAR-100 using the HST-2-256 (V1) at a time step of 4.

\subsection{Model Generalization Analysis}
As reported in Table \ref{avgseed}, we report results averaged over five random seeds for reliability. Our model consistently improves performance across time steps and depths. To assess robustness, we evaluate on the CIFAR-10C dataset with 15 corruption types. As shown in Table \ref{tabel3}, the model equipped with the TDAC module consistently achieves higher accuracy under various distortion settings. 

Moreover, we provide a visualization analysis of the TDFormer attention modules on CIFAR-10C and ImageNet-C. The specific results can be seen in Figure \ref{vis_cifar10} and Figure \ref{vis_imagenet} of the appendix \ref{a3}. We find that after adding the TDAC module, the model focuses more on the targets and their surrounding areas. This indicates that TDAC can filter noise and irrelevant information, allowing the model to focus more on task-related information.


\section{Conclusion}
In this study, we propose TDFormer, which integrates an adaptive top-down feedback structure into Transformer-based SNNs, addressing a key limitation of temporal information utilization in existing models by incorporating biological top-down mechanisms. The TDFormer model outperforms traditional Transformer-based SNNs, achieving SOTA performance across all evaluated datasets. Our work suggests that the top-down feedback structure could be a valuable component for Transformer-based SNNs and offers insights for future research into more advanced, biologically inspired neural architectures that better mimic human cognition.

\bibliographystyle{unsrt}
\bibliography{main}

\newpage
\appendix
\onecolumn
\section{Implementation Details}
\label{a1}

\subsection{Training Protocols} 
We adopted the following training protocols:

\begin{itemize}
    \item \textbf{Spike Generation}: We used a rate-based method for spike generation \cite{zhou2023spikformer}.
    \item \textbf{Data Augmentation and Training Duration}: SpikformerV1 experiments followed \cite{zhou2023spikformer}, while Spike-driven TransformerV1 experiments followed \cite{yao2023spikedriven}, furthermore QKformer experiments followed the experimental setting in and \cite{zhou2024qkformer}.
    \item \textbf{Optimization}: We employed AdamW \cite{loshchilov2017decoupled} as the optimizer for our experiments. The learning rate was set to $3 \times 10^{-4}$ for the Spike-driven TransformerV1. For SpikformerV1, we used a learning rate of $5 \times 10^{-4}$ on static datasets and $1 \times 10^{-3}$ on neuromorphic datasets. Additionally, we utilized a cosine learning rate scheduler to adjust the learning rate dynamically during training. Specifically, for QKformer, we fine-tuned the pretrained network with a base learning rate of $2 \times 10^{-5}$ for 15 epochs, due to the high cost of direct training on ImageNet using 4 time steps.
    \item \textbf{Batch Size}: The batch sizes for different datasets and models are specified in Table \ref{tab:batch_sizes}.
    \begin{table}[h]
    \caption{Batch sizes for different datasets and models.}
    \label{tab:batch_sizes}
    \centering
    \begin{tabular}{|c|c|c|}
    \toprule
    \textbf{Dataset} & \textbf{Model} & \textbf{Batch Size} \\ \midrule
    \multirow{2}{*}{\begin{tabular}[c]{@{}c@{}}CIFAR-10 and \\ CIFAR-100\end{tabular}} & SpikeformerV1 & 128 \\
     & Spike-driven TransformerV1 & 64 \\ \midrule
    \multirow{2}{*}{\begin{tabular}[c]{@{}c@{}}CIFAR10-DVS and \\ DVS128 Gesture\end{tabular}} & SpikeformerV1 & 16 \\
     & Spike-driven TransformerV1 & 16 \\ \midrule
     \multirow{1}{*}{\begin{tabular}[c]{@{}c@{}}ImageNet \end{tabular}}
     & QKformer & 57\\
    \bottomrule
    \end{tabular}
    \end{table}
\end{itemize}

\subsection{Datasets} 
Our experiments evaluated the performance and robustness of the TDFormer model using the following datasets:

\begin{itemize}
    \item \textbf{CIFAR-10:} This dataset contains 60,000 $32 \times 32$ color images divided into 10 classes \cite{Krizhevsky2009LearningML}.
    \item \textbf{CIFAR-100:} This dataset is similar to CIFAR-10 but includes 100 classes, providing a more challenging classification task \cite{Krizhevsky2009LearningML}.
    \item \textbf{CIFAR10-DVS:} This is an event-based version of the CIFAR-10 dataset \cite{li2017cifar10}.
    \item \textbf{DVS128 Gesture:} This is an event-based dataset for gesture recognition with 11 classes \cite{amir2017low}.
    \item \textbf{ImageNet:} This large-scale dataset contains over 1.2 million images divided into 1,000 classes \cite{deng2009imagenet}.
    \item \textbf{CIFAR-10C:} This is a corrupted version of CIFAR-10 with 19 common distortion types, used to assess robustness \cite{hendrycks2019benchmarking}.
    \item \textbf{ImageNet-C:} This dataset is a corrupted version of ImageNet, designed similarly to CIFAR-10C \cite{hendrycks2019benchmarking}.
\end{itemize}

\subsection{Computational Environment}

\subsubsection{Software Setup} 
We utilized PyTorch version 2.0.1 with CUDA 11.8 support and SpikingJelly version 0.0.0.0.12 as the primary software tools.

\subsubsection{Hardware Setup.}
For the smaller dataset experiments, we utilized the following configuration:
\begin{itemize}
    \item \textbf{Hardware Used:} NVIDIA L40S and L40 GPUs.
    \item \textbf{Configuration:} Single-GPU for each experiment.
    \item \textbf{Memory Capacity:} Each GPU is equipped with 42 GB of memory.
\end{itemize}

For the large-scale dataset (ImageNet) experiments, we employed the following setup:
\begin{itemize}
    \item \textbf{Hardware Used:} NVIDIA H20 GPUs.
    \item \textbf{Configuration:} Eight-GPU for each experiment.
    \item \textbf{Memory Capacity:} Each GPU provides 96 GB of memory.
\end{itemize}

\subsection{Random Seed} 
To ensure the comparability of the results, we selected the same random seeds as those in the baseline paper. To ensure robustness, we also conducted experiments with random seeds 0, 42, 2024, 3407 and 114514, averaging the results. Detailed results are presented in Table \ref{avgseed}.

\section{Mathematical Derivations}
\label{a2}

\subsection{Detailed proofs of the upper bound on PM output variance}
\label{pmbound}



\begin{proof} 
We assume that each $\mathbf{M}_{\text{spatial}}(t,n)$ is an independent random variable $M_{tn}$. Given that $b \leq M_{tn} \leq a$, it follows that $b \leq \mathbb{E}[M_{tn}] \leq a$. Furthermore, when $X_{tnc} \neq 0$, we have:
\begin{equation}
(X_{tnc}M_{tn}-b)(a-X_{tnc}M_{tn}) \geq 0,
\end{equation}
which expands to:
\begin{equation}
-(X_{tnc}M_{tn})^2 + (a + b)(X_{tnc}M_{tn}) - ab \geq 0.
\end{equation}
Taking the expectation on both sides yields:
\begin{equation}
\mathbb{E}\left[(X_{tnc}M_{tn})^2\right] \leq (a + b) \mathbb{E}\left[X_{tnc}M_{tn}\right] - ab.
\end{equation}

Using the Law of Total Variance, we can decompose the variance of $Y_{tnc}$ as:
\begin{equation}
\text{Var}(Y_{tnc}) = \mathbb{E}[\text{Var}(Y_{tnc} | X_{tnc})] + \text{Var}(\mathbb{E}[Y_{tnc} | X_{tnc}]).
\end{equation}

For the first term, the expectation of the conditional variance can be expressed as:
\begin{equation}
\mathbb{E}[\text{Var}(Y_{tnc} | X_{tnc})] = f \cdot \text{Var}(Y_{tnc} | X_{tnc} = 1) + (1-f) \cdot \text{Var}(Y_{tnc} | X_{tnc} = 0).
\end{equation}

For the second term, the variance of the conditional expectation can be expanded as:
\begin{equation}
\text{Var}(\mathbb{E}[Y_{tnc} | X_{tnc}]) = \mathbb{E}[\mathbb{E}[Y_{tnc} | X_{tnc}]^2] - \mathbb{E}[\mathbb{E}[Y_{tnc} | X_{tnc}]]^2.
\end{equation}
By substituting the conditional probabilities, we have:
\begin{equation}
\label{cp}
\text{Var}(\mathbb{E}[Y_{tnc} | X_{tnc}]) = f \cdot \mathbb{E}[Y_{tnc} | X_{tnc} = 1]^2 - f^2 \cdot \mathbb{E}[Y_{tnc} | X_{tnc} = 1]^2.
\end{equation}

Combining the two terms, the total variance becomes:
\begin{equation}
\text{Var}(Y_{tnc}) = f \cdot \text{Var}(Y_{tnc} | X_{tnc} = 1) + (f - f^2) \cdot \mathbb{E}[Y_{tnc} | X_{tnc} = 1]^2.
\end{equation}

From Equation \ref{cp}, we define $\mathbb{E}[Y_{tnc} | X_{tnc} = 1] = \mu$. Substituting this definition, the variance can be rewritten as:
\begin{align}
\text{Var}(Y_{tnc}) &= f \cdot (\mathbb{E}[Y_{tnc}^2 | X_{tnc} = 1] - \mu^2) + (f - f^2) \cdot \mu^2.
\end{align}

Using the constraints $b \leq M_{tn} \leq a$, we have the following bound for $\text{Var}(Y_{tnc} | X_{tnc} = 1)$:
\begin{equation}
\text{Var}(Y_{tnc} | X_{tnc} = 1) \leq (a + b)\mu - ab - \mu^2.
\end{equation}

By substituting this into the total variance expression, the upper bound of $\text{Var}(Y_{tnc})$ becomes:
\begin{align}
\text{Var}(Y_{tnc}) &\leq f \cdot ((a + b)\mu - ab - \mu^2) + (f - f^2) \cdot \mu^2 \notag \\
&\leq -f^2 \cdot \left(\mu - \frac{a + b}{2f}\right)^2 + \frac{a^2 + 2ab + b^2 - 4fab}{4}.
\end{align}

Next, we will prove that this upper bound can be achieved with equality under specific conditions.

\textbf{Case 1:} When $\frac{a+b}{2a} \leq f \leq 1$, we assume that:
\begin{equation}
\mathbb{E}[Y_{tnc}|X_{tnc}=1] = \frac{a+b}{2f}, \quad M_{tn} = a \ \text{or} \ b.
\end{equation}

Here, $M_{tn}$ is a binary random variable, taking the value $a$ with probability $p$ and the value $b$ with probability $1-p$. Using this assumption, we can express the conditional expectation $\mathbb{E}[Y_{tnc}|X_{tnc}=1]$ as:
\begin{equation}
\mathbb{E}[Y_{tnc}|X_{tnc}=1] = pa + (1-p)b.
\end{equation}
Substituting $\mathbb{E}[Y_{tnc}|X_{tnc}=1] = \frac{a+b}{2f}$ into the above equation, we solve for $p$:
\begin{equation}
pa + (1-p)b = \frac{a+b}{2f} \Rightarrow p = \frac{a+b-2bf}{2f(a-b)}.
\end{equation}

The variance of $Y_{tnc}$ under this distribution is maximized when $M_{tn}$ follows this binary distribution. Substituting $p$ into the variance formula, the maximum variance is given by:
\begin{equation}
\max(\text{Var}(Y_{tnc})) = \frac{a^2 + 2ab + b^2 - 4fab}{4}.
\end{equation}

\textbf{Case 2:} When $0 \leq f \leq \frac{a+b}{2a}$, the upper bound is achieved when $M_{tn} = a$. In this scenario, $M_{tn}$ is deterministic, and therefore:
\begin{equation}
Y_{tnc} = X_{tnc}M_{tn} = X_{tnc}a, \quad \mathbb{E}[Y_{tnc}|X_{tnc}=1] = a.
\end{equation}
Substituting this into the variance formula, the maximum variance simplifies to:
\begin{equation}
\max(\text{Var}(Y_{tnc})) = a^2(f^2-f+1/2) + ab(1-2f) + \, b^2/2.
\end{equation}

The proof is now complete. 
\end{proof}

We observe that both SSA and QKTA exhibit significantly larger variance compared to our proposed attention mechanism. Their variances are expressed as follows:

\textbf{Variance of QKTA:}
\begin{equation}
\text{Var}(\mathrm{QKTA}) = d f_{Q}(1 - f_{Q}),
\end{equation}
where $d$ is the feature dimension, and $f_{Q}$ represents the firing rate of the query.

\textbf{Variance of SSA:}
\begin{align}
\text{Var}(\text{SSA}) = Nd \Big(& f_{Q} f_{K} f_{V} (1 - f_{Q})(1 - f_{K})(1 - f_{V}) \notag \\
&+ f_{Q} f_{K} f_{V}^2 (1 - f_{Q})(1 - f_{K}) \notag \\
&+ f_{Q} f_{K}^2 f_{V} (1 - f_{Q})(1 - f_{V}) \notag \\
&+ f_{Q}^2 f_{K} f_{V} (1 - f_{K})(1 - f_{V}) \notag \\
&+ f_{Q} f_{K}^2 f_{V}^2 (1 - f_{Q}) \notag \\
&+ f_{Q}^2 f_{K} f_{V}^2 (1 - f_{K}) \notag \\
&+ f_{Q}^2 f_{K}^2 f_{V} (1 - f_{V}) \Big),
\end{align}
where $N$ is the number of spatial locations, $d$ is the feature dimension, and $f_{Q}, f_{K}, f_{V}$ are the firing rates of the query, key, and value.

\textbf{Comparison with Our Attention Mechanism:} The variance of QKTA scales linearly with $d$. By contrast, the variance of SSA grows with both $N$ and $d$, resulting in significantly larger values compared to QKTA. Our proposed attention mechanism is particularly effective in scenarios with large spatial ($N$) and feature ($d$) dimensions. The strict upper bound on output variance ensures numerical stability, preventing vanishing during training. Additionally, this upper bound eliminates the need for traditional scaling operations (e.g., scaling factors in QK products), simplifying computations, reducing complexity, and enhancing energy efficiency.

\subsection{The mathematical properties of hyperparameters}
\label{hyper}
Next, we will analyze the expectation and variance of the PM and propose an appropriate selection of hyperparameters to ensure output stability.

\begin{lemma}
\label{lem:B.2}
if the set $\{c \in  \mathbb{N}:w_c=0\}$ is finite and  $\exists \ m,M > 0$, $\forall \ c \in \mathbb{N}$, $m \leq |w_c| \leq M$, then:
\begin{equation}
    w_c' = \lim\limits_{C \to \infty} \frac{w_c}{\sqrt{\sum_{c=1}^C w_c^2}} = 0
\end{equation}
\end{lemma}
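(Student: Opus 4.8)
The plan is to prove the limit by a squeeze argument, exploiting that the numerator stays bounded for a fixed index while the denominator diverges. The hypotheses supply a uniform lower bound $m$ and upper bound $M$ on $|w_c|$ away from the finitely many zeros, and these are precisely the two ingredients needed: the upper bound controls the numerator, while the lower bound forces the denominator to grow like $\sqrt{C}$. Since the summation index clashes notationally with the fixed index $c$ in the numerator, I would first rename the summation variable to $k$, so that the quantity of interest is $w_c/\sqrt{\sum_{k=1}^{C} w_k^2}$ for a fixed $c$ as $C\to\infty$.

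First I would bound the denominator from below. Let $Z = |\{k : w_k = 0\}|$, which is finite by assumption. For any $C$, at most $Z$ of the terms $w_k^2$ with $1 \le k \le C$ can vanish, while every nonzero term satisfies $w_k^2 \ge m^2$. Hence
\[
\sum_{k=1}^{C} w_k^2 \;\ge\; (C - Z)\,m^2,
\]
so that $\sqrt{\sum_{k=1}^{C} w_k^2} \ge m\sqrt{C - Z}$ whenever $C > Z$.

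Next I would bound the ratio. Fixing the index $c$ in the numerator, we have $|w_c| \le M$, and combining this with the previous step gives
\[
0 \;\le\; \left| \frac{w_c}{\sqrt{\sum_{k=1}^{C} w_k^2}} \right| \;\le\; \frac{M}{m\sqrt{C - Z}}
\]
for all $C > Z$. Letting $C \to \infty$, the right-hand side tends to $0$, and the squeeze theorem yields $w_c' = 0$, completing the argument.

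As for difficulty, there is essentially no hard step here; the lemma is a direct consequence of the uniform lower bound $m$, which is what drives the normalization denominator to infinity. The only point requiring a moment's care is a mild redundancy in the hypotheses: the condition $m \le |w_c|$ stated for all $c$ would already force the zero set to be empty. I would therefore read the statement charitably as imposing $m \le |w_c| \le M$ only on the nonzero weights, which is the interpretation under which the finite-zero-set assumption does genuine work and the bound above goes through verbatim with the $(C-Z)$ correction.
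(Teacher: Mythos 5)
Your proof is correct and takes essentially the same approach as the paper's: both lower-bound the denominator by $(C-Z)m^2$ using the finitely many zero terms, upper-bound the fixed numerator by $M$, and conclude $|w_c'| \le M/\bigl(m\sqrt{C-Z}\bigr) \to 0$ (the paper phrases the final step via an explicit $\epsilon$ threshold $C \ge M^2/(m^2\epsilon^2) + k$ rather than the squeeze theorem, which is the same argument). Your renaming of the clashing summation index and your charitable reading of the internally inconsistent hypotheses are sensible clarifications, not a different method.
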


\begin{proof}
We begin by defining the normalized weight:
\begin{equation}
    w_c' = \frac{w_c}{\sqrt{\sum_{c=1}^C w_c^2}}.
\end{equation}

By assumption, there are $k$ terms where $w_c = 0$, and for the remaining $C - k$ terms, the weights satisfy:
\begin{equation}
    m^2 \leq w_c^2 \leq M^2 \quad \text{for all } c.
\end{equation}
Thus, the sum of squares of the weights is bounded as follows:
\begin{equation}
    (C - k)m^2 \leq \sum_{c=1}^C w_c^2 \leq (C - k)M^2.
\end{equation}
Taking the square root, we find that the denominator grows as:
\begin{equation}
    \sqrt{\sum_{c=1}^C w_c^2} \geq \sqrt{(C - k)m^2} \sim O(\sqrt{C}).
\end{equation}

Using the bound $|w_c| \leq M$, the normalized weight $w_c'$ satisfies:
\begin{equation}
    |w_c'| = \frac{|w_c|}{\sqrt{\sum_{c=1}^C w_c^2}} \leq \frac{M}{\sqrt{\sum_{c=1}^C w_c^2}} \leq \frac{M}{\sqrt{(C - k)m^2}}.
\end{equation}

To ensure $|w_c'| < \epsilon$ for a given $\epsilon > 0$, it suffices to require:
\begin{equation}
    \frac{M}{\sqrt{(C - k)m^2}} < \epsilon.
\end{equation}
Rearranging, this condition can be rewritten as:
\begin{equation}
    C \geq \frac{M^2}{m^2 \epsilon^2} + k.
\end{equation}

As $C \to \infty$, the condition $C \geq \frac{M^2}{m^2 \epsilon^2} + k$ is always satisfied. Thus, for any $\epsilon > 0$, we have $|w_c'| < \epsilon$, which implies:
\begin{equation}
    \lim_{C \to \infty} w_c' = 0.
\end{equation}
The proof is complete.
\end{proof}

\begin{lemma}
\label{lem:B.3}
We assume that the features across different channels are independent and identically distributed (i.i.d.). When the number of channels $C$ is large, we have:
\begin{equation}
    M_{tn} \sim \mathcal{N}\left(\sum_{c=1}^C w_c' f_r, \sum_{c=1}^C w_c'^2 f_r (1 - f_r)\right), \quad C \to \infty,
\end{equation}
\begin{equation}
    M_{tn} = \sum_{c=1}^C x_{tnc} w_c'.
\end{equation}
where $x \in X$, $x \sim \text{Bernoulli}(f_r)$, $f_r$ represents the firing rate (the probability of $x_{tnc} = 1$).
\end{lemma}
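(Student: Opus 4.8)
The plan is to recognize that $M_{tn} = \sum_{c=1}^C x_{tnc} w_c'$ is a sum of independent but \emph{not} identically distributed random variables, and therefore to establish asymptotic normality via a Central Limit Theorem for triangular arrays rather than the classical i.i.d.\ version. Writing $Y_c = x_{tnc} w_c'$, independence across $c$ follows from the i.i.d.\ assumption on the $x_{tnc}$, and a direct computation gives $\mathbb{E}[Y_c] = w_c' f_r$ and $\text{Var}(Y_c) = w_c'^2 f_r(1 - f_r)$, so that $M_{tn}$ has mean $\sum_{c=1}^C w_c' f_r$ and variance $s_C^2 = f_r(1-f_r)\sum_{c=1}^C w_c'^2$. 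A key simplification is that the normalization in Lemma~\ref{lem:B.2} forces $\sum_{c=1}^C (w_c')^2 = 1$, so that $s_C^2 = f_r(1-f_r)$ is in fact constant in $C$; this matches the stated limiting variance.

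First I would verify the Lyapunov condition with exponent $\delta = 1$, namely
\begin{equation}
\frac{1}{s_C^{3}} \sum_{c=1}^C \mathbb{E}\big[\,|Y_c - \mathbb{E}[Y_c]|^{3}\,\big] \longrightarrow 0, \qquad C \to \infty.
\end{equation}
Since $|x_{tnc} - f_r|$ is bounded by $1$, its third absolute moment $\rho \equiv \mathbb{E}[|x_{tnc} - f_r|^3]$ is a finite constant, and $\mathbb{E}[|Y_c - \mathbb{E}[Y_c]|^3] = |w_c'|^3 \rho$. Using $\sum_c (w_c')^2 = 1$ I would bound
\begin{equation}
\sum_{c=1}^C |w_c'|^3 \leq \Big(\max_{1 \leq c \leq C} |w_c'|\Big) \sum_{c=1}^C (w_c')^2 = \max_{1 \leq c \leq C} |w_c'|,
\end{equation}
which reduces the entire condition to showing $\max_c |w_c'| \to 0$. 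This is exactly where Lemma~\ref{lem:B.2} enters: its proof furnishes the \emph{uniform} bound $|w_c'| \leq M / (m\sqrt{C-k})$, valid for every $c$, so that $\max_c |w_c'| = O(1/\sqrt{C}) \to 0$. Combining these estimates makes the Lyapunov ratio of order $O(1/\sqrt{C})$, which vanishes.

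With the Lyapunov condition verified, the Lyapunov central limit theorem (a sufficient form of the Lindeberg--Feller theorem) yields
\begin{equation}
\frac{M_{tn} - \sum_{c=1}^C w_c' f_r}{\sqrt{f_r(1-f_r)}} \xrightarrow{\ d\ } \mathcal{N}(0,1),
\end{equation}
which is precisely the claimed convergence $M_{tn} \sim \mathcal{N}\big(\sum_c w_c' f_r,\ \sum_c w_c'^2 f_r(1-f_r)\big)$ as $C \to \infty$. The main obstacle is conceptual rather than computational: because the summands carry distinct weights $w_c'$, the ordinary CLT does not apply, and one must instead control the largest individual contribution. The crucial observation is that Lemma~\ref{lem:B.2} is tailored to supply exactly this negligibility — no single channel dominates the sum — so the two lemmas dovetail as hypothesis and verification. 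A secondary subtlety worth flagging is that the weights form a triangular array, since they are re-normalized as $C$ grows, which is why the triangular-array form of the CLT is the correct tool here.
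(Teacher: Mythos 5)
Your proof is correct, but it takes a genuinely different route from the paper's. The paper proves Lemma~\ref{lem:B.3} by a direct characteristic-function computation: it writes $\Phi_{M_{tn}}(t)=\prod_{c=1}^{C}\bigl(f_r e^{itw_c'}+(1-f_r)\bigr)$, Taylor-expands each factor using the smallness of $w_c'$ (supplied by Lemma~\ref{lem:B.2}), takes logarithms to turn the product into a sum, and identifies the limit $\exp\bigl(it\sum_c w_c' f_r-\tfrac{1}{2}t^2\sum_c w_c'^2 f_r(1-f_r)\bigr)$ as the characteristic function of the claimed normal law. You instead treat $\{x_{tnc}w_c'\}$ as a triangular array and invoke the Lyapunov CLT with $\delta=1$, reducing the Lyapunov ratio to $\max_c|w_c'|$ via the clean inequality $\sum_c|w_c'|^3\le(\max_c|w_c'|)\sum_c(w_c')^2$ together with the normalization $\sum_c(w_c')^2=1$, and then using the uniform bound $|w_c'|\le M/(m\sqrt{C-k})$ from the proof of Lemma~\ref{lem:B.2}. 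Your observation that $s_C^2=f_r(1-f_r)$ is constant in $C$ is correct and matches the paper's own later identification $\sigma^2=f(1-f)$. What each approach buys: yours is arguably the more rigorous, since it cites a standard theorem with an explicit $O(1/\sqrt{C})$ rate on the Lyapunov ratio, whereas the paper's expansion carries uncontrolled $\approx$ signs and slightly sloppy error bookkeeping (e.g., the remainder in $\ln(1+x)$ should be $O(x^3)$, i.e.\ $O(|w_c'|^3)$); the paper's version, on the other hand, is self-contained and deliberately reuses the characteristic-function machinery that Lemma~\ref{lem:B.4} then extends to prove asymptotic independence of $M_{tn}$ and $X_{tnc}$, so the two lemmas share one toolkit. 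Two minor points you should flag for completeness: the limiting variance is nondegenerate only when $0<f_r<1$, and since the centering $\sum_c w_c' f_r$ itself depends on $C$, the precise statement is convergence in distribution of the standardized variable, which is exactly the form you wrote.
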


\begin{proof}
To prove this lemma, we use the characteristic function method. The characteristic function of a Bernoulli random variable $x_{tnc}$ is given by:
\begin{equation}
    \Phi_{x_{tnc}}(t) = \mathbb{E}\left[e^{itx_{tnc}}\right] = f_r e^{it} + (1 - f_r).
\end{equation}

For the weighted variable $w_c' x_{tnc}$, its characteristic function is:
\begin{equation}
    \Phi_{w_c' x_{tnc}}(t) = \mathbb{E}\left[e^{it w_c' x_{tnc}}\right] = f_r e^{it w_c'} + (1 - f_r).
\end{equation}

Since the features across channels are independent, the characteristic function of $M_{tn}$ is:
\begin{equation}
    \Phi_{M_{tn}}(t) = \prod_{c=1}^C \Phi_{w_c' x_{tnc}}(t).
\end{equation}
Substituting the expression for $\Phi_{w_c' x_{tnc}}(t)$:
\begin{equation}
    \label{cfmtn}
    \Phi_{M_{tn}}(t) = \prod_{c=1}^C \left(f_r e^{it w_c'} + (1 - f_r)\right).
\end{equation}

\begin{align}
    f_r e^{it w_c'} + (1 - f_r) 
    & =  f_r \left(1 + it w_c' - \frac{1}{2} t^2 w_c'^2+ o(w_c'^2)\right) + (1 - f_r) \notag \\
    &\approx 1 + f_r (it w_c' - \frac{1}{2} t^2 w_c'^2).
\end{align}

Thus, the characteristic function becomes:
\begin{equation}
    \Phi_{M_{tn}}(t) \approx \prod_{c=1}^C \left(1 + f_r (it w_c' - \frac{1}{2} t^2 w_c'^2)\right).
\end{equation}

Taking the logarithm to simplify the product into a sum:
\begin{align}
    \ln \Phi_{M_{tn}}(t) &= \sum_{c=1}^C \ln \left(1 + f_r (it w_c' - \frac{1}{2} t^2 w_c'^2)\right) \notag \\
    &= \sum_{c=1}^C f_rit w_c' - \frac{1}{2} t^2 w_c'^2 f_r+ \frac{1}{2}t^2w_c'^2f_r^2 + O(w_c'^2),
\end{align}
where we used $\ln(1 + x) =  x-\frac{1}{2}x^2 + O(x^2)$ for small $x$.

Separating terms, we get:
\begin{align}
    \ln \Phi_{M_{tn}}(t) &\approx it \sum_{c=1}^C w_c' f_r - \frac{1}{2} t^2 \sum_{c=1}^C w_c'^2 f_r (1 - f_r).
\end{align}

Exponentiating the logarithm gives:
\begin{equation}
    \Phi_{M_{tn}}(t) = \exp\left(it \sum_{c=1}^C w_c' f_r - \frac{1}{2} t^2 \sum_{c=1}^C w_c'^2 f_r (1 - f_r)\right).
\end{equation}

This is the characteristic function of a normal distribution with:
\begin{align}
    \text{Mean: } & \quad \mu = \sum_{c=1}^C w_c' f_r, \\
    \text{Variance: } & \quad \sigma^2 = \sum_{c=1}^C w_c'^2 f_r (1 - f_r).
\end{align}

Since the characteristic function corresponds to a normal distribution, we conclude:
\begin{equation}
    M_{tn} \sim \mathcal{N}\left(\sum_{c=1}^C w_c' f_r, \sum_{c=1}^C w_c'^2 f_r (1 - f_r)\right).
\end{equation}
The proof is complete.
\end{proof}

\begin{lemma}
\label{lem:B.4}
The distributions of $X_{tnc}$ and $M_{tn}$ can be considered independent when the number of channels $C$ is large. Specifically, for all $t_1, t_2 \in \mathbb{R}$, we have:
\begin{equation}
    \phi_{M_{tn}, X_{tnc}}(t_1, t_2) = \phi_{M_{tn}}(t_1) \cdot \phi_{X_{tnc}}(t_2), \quad C \to \infty,
\end{equation}
where $\phi_X(t)$ represents the characteristic function of $X$.
\end{lemma}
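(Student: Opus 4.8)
The plan is to work directly with the joint characteristic function and exploit the fact, established in Lemma~\ref{lem:B.2}, that the normalized weight $w_c'$ of any fixed channel $c$ vanishes as $C \to \infty$. The intuition is that $X_{tnc}$ enters the sum $M_{tn} = \sum_{k=1}^{C} x_{tnk} w_k'$ only through the single term $w_c' X_{tnc}$, whose coefficient shrinks to zero; hence the dependence between $M_{tn}$ and $X_{tnc}$ must become asymptotically negligible, which is exactly the factorization claimed.

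First I would isolate the contribution of channel $c$ by writing $M_{tn} = w_c' X_{tnc} + M_{tn}^{(-c)}$, where $M_{tn}^{(-c)} = \sum_{k \neq c} x_{tnk} w_k'$ collects all the other channels. Since the $x_{tnk}$ are i.i.d. across channels, $M_{tn}^{(-c)}$ is independent of $X_{tnc}$. Substituting this decomposition into the joint characteristic function and factoring the exponential yields
\begin{equation}
\phi_{M_{tn}, X_{tnc}}(t_1, t_2) = \mathbb{E}\!\left[e^{i t_1 M_{tn}^{(-c)}} \, e^{i(t_1 w_c' + t_2) X_{tnc}}\right] = \phi_{M_{tn}^{(-c)}}(t_1)\,\phi_{X_{tnc}}(t_1 w_c' + t_2),
\end{equation}
where the second equality uses the independence just noted.

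Next I would let $C \to \infty$. For the second factor, continuity of the characteristic function together with $w_c' \to 0$ (Lemma~\ref{lem:B.2}) gives $\phi_{X_{tnc}}(t_1 w_c' + t_2) \to \phi_{X_{tnc}}(t_2)$. For the first factor, I would show $\phi_{M_{tn}^{(-c)}}(t_1) \to \phi_{M_{tn}}(t_1)$ by bounding the difference directly; using $|e^{i\theta}-1| \le |\theta|$ and $|X_{tnc}| \le 1$,
\begin{equation}
\bigl|\phi_{M_{tn}}(t_1) - \phi_{M_{tn}^{(-c)}}(t_1)\bigr| \le \mathbb{E}\bigl[\,|e^{i t_1 w_c' X_{tnc}} - 1|\,\bigr] \le |t_1|\, w_c'\, f_r \longrightarrow 0.
\end{equation}
Since every characteristic function is bounded by $1$ in modulus, a single triangle-inequality step then shows that $\phi_{M_{tn}^{(-c)}}(t_1)\,\phi_{X_{tnc}}(t_1 w_c' + t_2)$ converges to $\phi_{M_{tn}}(t_1)\,\phi_{X_{tnc}}(t_2)$, which is precisely the asserted factorization.

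The main obstacle is the second limit: one must argue that deleting a single channel does not perturb the limiting law of $M_{tn}$. This is exactly where the boundedness of the Bernoulli variable and the vanishing coefficient $w_c'$ from Lemma~\ref{lem:B.2} do the work, so the leave-one-out decomposition together with the elementary inequality $|e^{i\theta}-1|\le|\theta|$ are the crucial ingredients; the rest reduces to a continuity argument and the uniform modulus bound on characteristic functions.
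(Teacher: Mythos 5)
Your proposal is correct, and its first half coincides exactly with the paper's proof: both isolate the channel-$c$ contribution by writing $M_{tn} = w_c' X_{tnc} + \sum_{i \neq c} w_i' X_{tni}$ and use independence across channels to obtain the exact factorization $\phi_{M_{tn},X_{tnc}}(t_1,t_2) = \phi_{M_{tn}^{(-c)}}(t_1)\,\phi_{X_{tnc}}(t_1 w_c' + t_2)$. Where you genuinely diverge is in how the limit $C \to \infty$ is handled. The paper substitutes the Bernoulli characteristic functions and then matches first-order Taylor expansions on the two sides: it approximates each factor $(1-f) + f e^{i t_1 w_i'}$ by $1 + f\, i t_1 w_i'$, approximates $(1-f) + f e^{i(t_2 + t_1 w_c')}$ by $(1-f) + f e^{i t_2}$, and in its final chain silently replaces the extra factor $1 + f\, i t_1 w_c'$ by $1$; none of these approximation errors is controlled, and dropping second-order terms inside a product of $C$ factors in particular needs an argument that the accumulated error vanishes (it does, because $\sum_c w_c'^2 = 1$ is bounded, but the paper never says so). You replace this expansion-matching entirely with exact estimates: $\lvert \phi_{X_{tnc}}(t_1 w_c' + t_2) - \phi_{X_{tnc}}(t_2)\rvert = f_r \lvert e^{i t_1 w_c'} - 1\rvert \le f_r \lvert t_1\rvert\,\lvert w_c'\rvert$, the leave-one-out bound $\lvert \phi_{M_{tn}}(t_1) - \phi_{M_{tn}^{(-c)}}(t_1)\rvert \le f_r \lvert t_1\rvert\,\lvert w_c'\rvert$ via $\lvert e^{i\theta}-1\rvert \le \lvert\theta\rvert$, and a triangle inequality using $\lvert\phi\rvert \le 1$. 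This buys genuine rigor and even a quantitative rate $O(\lvert t_1\rvert\,\lvert w_c'\rvert)$ for the defect in the factorization, whereas the paper's treatment of the limiting step is heuristic. Two small repairs: write $\lvert w_c'\rvert$ rather than $w_c'$ in your bounds, since Lemma~\ref{lem:B.2} allows negative weights; and since the normalized weights, hence $M_{tn}$ itself, depend on $C$ (a triangular array), phrase the conclusion as ``the difference between the joint characteristic function and the product of marginals tends to zero'' rather than as convergence of $\phi_{M_{tn}^{(-c)}}$ to a fixed function --- your displayed inequalities already prove precisely that.
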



\begin{proof}
The joint characteristic function of $M_{tn}$ and $X_{tnc}$ is given by:
\begin{align}
    \phi_{M_{tn}, X_{tnc}}(t_1, t_2) &= \mathbb{E}\left[e^{\left(it_1 M_{tn} + it_2 X_{tnc}\right)}\right] \notag \\
    &= \mathbb{E}\left[e^{\left(it_1 \sum_{c} w_c' X_{tnc} + it_2 X_{tnc}\right)}\right].
\end{align}

Separating $X_{tnc}$ and the sum $\sum_{i \neq c} w_i' X_{tni}$, we rewrite:
\begin{align}
    \phi_{M_{tn}, X_{tnc}}(t_1, t_2) &= \mathbb{E}\left[e^{\left(it_1 \sum_{i \neq c} w_i' X_{tni} + i X_{tnc}(t_2 + t_1 w_c')\right)}\right] \notag \\
    &= \mathbb{E}\left[e^{\left(it_1 \sum_{i \neq c} w_i' X_{tni}\right)}\right] \cdot \mathbb{E}\left[e^{\left(i X_{tnc} (t_2 + t_1 w_c')\right)}\right].
\end{align}

Using the independence of $X_{tni}$ across channels:
\begin{align}
    \phi_{M_{tn}, X_{tnc}}(t_1, t_2) &= \prod_{i \neq c} \mathbb{E}\left[e^{\left(it_1 w_i' X_{tni}\right)}\right] \cdot \mathbb{E}\left[e^{\left(i X_{tnc} (t_2 + t_1 w_c')\right)}\right].
\end{align}

Substituting the characteristic function of Bernoulli random variables $X_{tnc} \sim \text{Bernoulli}(f)$:
\begin{align}
    \label{cfxtnc}
    \mathbb{E}\left[e^{it X_{tnc}}\right)] &= (1 - f) + f e^{it}.
\end{align}
Thus:
\begin{align}
    \phi_{M_{tn}, X_{tnc}}(t_1, t_2) &= \prod_{i \neq c} \left[(1 - f) + f e^{it_1 w_i'}\right] \cdot \left[(1 - f) + f e^{i(t_2 + t_1 w_c')}\right].
\end{align}

Using Lemma \ref{lem:B.3}, for small $w_{c}'$, we apply the Taylor expansion to approximate each term:
\begin{align}
    (1 - f) + f e^{it_1 w_i'} &\approx 1 + f (it_1 w_i'), \\
    (1 - f) + f e^{i(t_2 + t_1 w_c')} &\approx (1 - f) + f e^{it_2}.
\end{align}

Substituting back:
\begin{align}
    \phi_{M_{tn}, X_{tnc}}(t_1, t_2) &\approx \prod_{i \neq c} \left(1 + f it_1 w_i'\right) \cdot \left[(1 - f) + f e^{it_2}\right].
\end{align}

Using Equation \ref{cfmtn}, Equation \ref{cfxtnc} and Taylor expansion, the product of the characteristic functions for the two distributions is:
\begin{align}
\phi_{X_{tnc}}(t_2)\phi_{M_{tn}}(t_1) 
  &= (1-f + fe^{it_{2}})\prod_{i=1}^{C}(1-f+fe^{it_{1}w_{i}'}) \notag \\
  &= (1-f + fe^{it_{2}})\prod_{i=1}^{C}(1+fit_{1}w_{i}') \notag \\
  &= (1-f + fe^{it_{2}})(1+fit_{1}w_{c}')\prod_{i\neq c}(1+fit_{1}w_{i}') \notag \\
  &= (1-f + fe^{it_{2}})\prod_{i\neq c}(1+fit_{1}w_{i}') \notag \\
  &= \phi_{M_{tn},X_{tnc}}(t_{1},t_{2})  
\end{align}
Thus, the joint characteristic function factorizes into the product of the marginal characteristic functions, which demonstrates that $M_{tn}$ and $X_{tnc}$ are asymptotically independent as $C \to \infty$.
\end{proof}

\begin{proposition}
If \( b \approx 0 \), \( a \geq 1 \), and the firing rate \( f \) is relatively small value, the PM output \( Y_{tnc} \) satisfies:
\begin{align}
\mathbb{E}(Y_{tnc}) &\approx \sqrt{\frac{f(1-f)}{2\pi}} \, \mathbb{E}(X_{tnc}) \label{eq:expectation} \\
\text{Var}(Y_{tnc}) &\approx \frac{f(\pi - f)}{2\pi} \, \text{Var}(X_{tnc}) \label{eq:variance}
\end{align}
\end{proposition}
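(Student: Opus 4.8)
The plan is to reduce the two moments of the pre-activation product $Y_{tnc} = X_{tnc}M_{tn}$ to moments of a clamped Gaussian by chaining the three preceding lemmas, and then evaluate those moments in the stated regime ($b \approx 0$, $a \geq 1$, small $f$). First I would invoke Lemma~\ref{lem:B.4} to treat $X_{tnc}$ and $M_{tn}$ as independent (asymptotically in $C$). Independence gives $\mathbb{E}[Y_{tnc}] = \mathbb{E}[X_{tnc}]\,\mathbb{E}[M_{tn}]$ and, via the product-variance identity, $\text{Var}(Y_{tnc}) = \mathbb{E}[X_{tnc}^2]\,\mathbb{E}[M_{tn}^2] - \mathbb{E}[X_{tnc}]^2\,\mathbb{E}[M_{tn}]^2$. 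Since $X_{tnc}$ is Bernoulli we have $X_{tnc}^2 = X_{tnc}$, so $\mathbb{E}[X_{tnc}^2] = \mathbb{E}[X_{tnc}] = f$ and $\text{Var}(X_{tnc}) = f(1-f)$; the entire problem then collapses to computing $\mathbb{E}[M_{tn}]$ and $\mathbb{E}[M_{tn}^2]$.

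Second, I would characterize $M_{tn}$ through Lemma~\ref{lem:B.3}: the unclamped $M_{tn}$ is approximately $\mathcal{N}(\mu,\sigma^2)$ with $\mu = f\sum_c w_c'$ and $\sigma^2 = f(1-f)$, where $\sum_c w_c'^2 = 1$ holds by the normalization of the weights in Lemma~\ref{lem:B.2}. This is exactly where the small-$f$ hypothesis enters: because $\mu/\sigma = O(\sqrt{f})$, the mean shift is negligible relative to the spread, so I would approximate $M_{tn}$ as centered, $M_{tn} \approx \mathcal{N}(0,\sigma^2)$.

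Third, I would fold in the clamp to $[b,a]$. With $b \approx 0$ the lower clamp maps the entire negative half-line onto $0$, while with $a \geq 1$ and $\sigma = \sqrt{f(1-f)}$ small, the event $\{Z > a\}$ has negligible probability, so the upper clamp can be discarded. The required moments then reduce to half-Gaussian integrals, $\mathbb{E}[M_{tn}] \approx \int_0^\infty z\,\phi_\sigma(z)\,dz = \sigma/\sqrt{2\pi} = \sqrt{f(1-f)/(2\pi)}$ and $\mathbb{E}[M_{tn}^2] \approx \int_0^\infty z^2\,\phi_\sigma(z)\,dz = \sigma^2/2 = f(1-f)/2$, where $\phi_\sigma$ denotes the centered Gaussian density of standard deviation $\sigma$.

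Finally, substituting these into the two identities from the first step yields $\mathbb{E}[Y_{tnc}] \approx \sqrt{f(1-f)/(2\pi)}\,\mathbb{E}[X_{tnc}]$ and $\text{Var}(Y_{tnc}) \approx \tfrac{f^2(1-f)}{2} - \tfrac{f^3(1-f)}{2\pi} = \tfrac{f(\pi-f)}{2\pi}\,\text{Var}(X_{tnc})$, which are the two claimed relations. The main obstacle is the third step: rigorously controlling the two approximation errors — dropping the mean shift $\mu$ and discarding the upper-clamp tail — and this is precisely where the hypotheses "$f$ small" and "$a \geq 1$" are needed to guarantee $\mu/\sigma$ and $\mathbb{P}(Z>a)$ are both small. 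The remaining computations are routine Gaussian moment integrals.
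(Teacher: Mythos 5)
Your proposal is correct and follows essentially the same route as the paper's proof: Lemma~\ref{lem:B.3} for the Gaussian approximation of $M_{tn}$ with $\sigma^2 = f(1-f)$, Lemma~\ref{lem:B.4} for asymptotic independence of $X_{tnc}$ and the (clamped) $M_{tn}$, the $b\approx 0$, small-$\sigma$ regime reducing the clamped moments to the half-Gaussian values $\sigma/\sqrt{2\pi}$ and $\sigma^2/2$, and the same final algebra. The only cosmetic difference is that you compute $\mathrm{Var}(Y_{tnc})$ directly as $\mathbb{E}[X_{tnc}^2]\,\mathbb{E}[M_{tn}'^2]-\mathbb{E}[X_{tnc}]^2\,\mathbb{E}[M_{tn}']^2$ using $X_{tnc}^2=X_{tnc}$, whereas the paper uses the equivalent three-term product-variance identity after first forming $\mathrm{Var}(M_{tn}')\approx\tfrac{\pi-1}{2\pi}f(1-f)$.
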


\begin{proof}
For convenience, we denote:
\begin{align}
\mu = \sum_{c=1}^C w_{c}' f, \quad \sigma^2 = \sum_{c=1}^{C} w_{c}'^2 f (1 - f) = f(1-f), \quad M'_{tn} = \text{clamp}(M_{tn}, b, a).
\end{align}

According to Lemma \ref{lem:B.3}, the input distribution satisfies:
\begin{align}
M_{tn} \sim \mathcal{N}(\mu, \sigma^2).
\end{align}
The expectation of the clamped variable $M'_{tnc}$ is:
\begin{align}
\mathbb{E}(M'_{tn}) &= \int_{-\infty}^{\infty} x f(x) dx \notag \\
&= \frac{1}{\sqrt{2\pi\sigma^2}} \int_{0}^{a} x \exp\left(-\frac{(x-\mu)^2}{2\sigma^2}\right) dx 
+ \frac{a}{\sqrt{2\pi\sigma^2}} \int_{a}^{\infty} \exp\left(-\frac{(x-\mu)^2}{2\sigma^2}\right) dx. 
\end{align}

For the first term, let $t = (x - \mu)^2$, if $\mu \approx 0$, then:
\begin{align}
&\frac{1}{\sqrt{2\pi\sigma^2}}\int_{0}^{a} x \exp\left(-\frac{(x-\mu)^2}{2\sigma^2}\right) dx \notag\\
&= \frac{1}{2\sqrt{2\pi\sigma^2}} \int_{\mu^2}^{(a-\mu)^2} \exp\left(-\frac{t}{2\sigma^2}\right) dt 
+ \frac{\mu}{\sqrt{2\pi\sigma^2}} \int_{0}^{a} \exp\left(-\frac{(x-\mu)^2}{2\sigma^2}\right) dx \notag\\
&= \frac{-\sigma}{\sqrt{2\pi\sigma}} \left[ \exp\left(-\frac{t}{2\sigma^2}\right)\right]_{\mu^2}^{(a-\mu)^2} 
+ \mu \left(\Phi\left(\frac{a-\mu}{\sigma}\right) - \Phi\left(\frac{-\mu}{\sigma}\right)\right) \notag\\
&\approx \frac{\sigma}{\sqrt{2\pi}} \left(1 - \exp\left(-\frac{a^2}{2\sigma^2}\right)\right).
\end{align}
where $ \Phi(x) $ is the CDF of the standard normal distribution. The second term in the expectation is straightforward:
\begin{align}
\frac{a}{\sqrt{2\pi\sigma^2}}\int_{a}^{\infty} \exp\left(-\frac{(x-\mu)^2}{2\sigma^2}\right) dx 
&= \frac{a}{\sqrt{2\pi\sigma^2}}\int_{a - \mu}^{\infty} \exp\left(-\frac{t^2}{2\sigma^2}\right) dt,
\end{align}
Using the cumulative distribution function (CDF) again:
\begin{align}
\frac{a}{\sqrt{2\pi\sigma^2}}\int_{a - \mu}^{\infty} \exp\left(-\frac{t^2}{2\sigma^2}\right) dt 
&= a \left(1 - \Phi\left(\frac{a - \mu}{\sigma}\right)\right) \notag\\
&\approx a \left(1 - \Phi\left(\frac{a}{\sigma}\right)\right)
\end{align}
The $\Phi(\frac{a}{\sigma})$ and $\exp(-\frac{a^2}{2\sigma^2})$ function decay rapidly as $\sigma$ decreases. Now, combining the results from the two integrals, we have:
\begin{align}
\mathbb{E}(M'_{tn}) &= \frac{\sigma}{\sqrt{2\pi}} - \frac{\sigma}{\sqrt{2\pi}} \exp\left(-\frac{a^2}{2\sigma^2}\right) 
+ a \left(1 - \Phi\left(\frac{a - \mu}{\sigma}\right)\right) \notag\\
&\approx \frac{\sigma}{\sqrt{2\pi}}
\end{align}

Based on \ref{lem:B.4}, we calculate the expectation and variance of $M'^2_{tn}$:
\begin{align}
\mathbb{E}(M'^2_{tn}) &= \int_{-\infty}^\infty x^2 f(x) dx \notag \\
&= \frac{1}{\sqrt{2\pi\sigma^2}} \int_{0}^{a} x^2 \exp\left(-\frac{x^2}{2\sigma^2}\right) dx 
+ a^2 \cdot \int_{a}^\infty f(x) dx. \label{eq:expectation_m2}
\end{align}

We calculate the first term using integration by parts. Let:
\begin{align}
u = x, \quad dv = x \exp\left(-\frac{x^2}{2\sigma^2}\right) dx, \quad du = dx, \quad v = -\sigma^2 \exp\left(-\frac{x^2}{2\sigma^2}\right).
\end{align}
Then:
\begin{align}
&\frac{1}{\sqrt{2\pi\sigma^2}}\int_{0}^{a} x^2 \exp\left(-\frac{x^2}{2\sigma^2}\right) dx \notag \\
&= \frac{1}{\sqrt{2\pi\sigma^2}} 
\left( 
\left[ -\sigma^2 x \exp\left(-\frac{x^2}{2\sigma^2}\right) \right]_{0}^{a} 
+ \sigma^2 \int_{0}^{a} \exp\left(-\frac{x^2}{2\sigma^2}\right) dx \right)
\notag \\
&= \frac{1}{\sqrt{2\pi\sigma^2}}\left( -\sigma^2 a \exp\left(-\frac{a^2}{2\sigma^2}\right) + \sigma^2 \int_{0}^{a} \exp\left(-\frac{x^2}{2\sigma^2}\right) dx \right).\label{eq:first_term}
\end{align}

The remaining integral is a standard normal distribution integral:
\begin{align}
\frac{\sigma^2}{\sqrt{2\pi\sigma^2}}\int_{0}^{a} \exp\left(-\frac{x^2}{2\sigma^2}\right) dx =  \sigma^2 \left( \Phi\left(\frac{a}{\sigma}\right) - \frac{1}{2} \right), \label{eq:normal_cdf}
\end{align}
where $ \Phi(x) $ is the CDF of the standard normal distribution.

Substituting \eqref{eq:normal_cdf} into \eqref{eq:first_term}:
\begin{align}
\frac{1}{\sqrt{2\pi\sigma^2}}\int_{0}^{a} x^2 \exp\left(-\frac{x^2}{2\sigma^2}\right) dx 
 = \frac{-a\sigma}{\sqrt{2\pi}}  \exp\left(-\frac{a^2}{2\sigma^2}\right) + \sigma^2 \left( \Phi\left(\frac{a}{\sigma}\right) - \frac{1}{2} \right). \label{eq:final_first_term}
\end{align}

The second term is the tail of the normal distribution:
\begin{align}
\int_{a}^\infty f(x) dx =  \Phi\left(-\frac{a}{\sigma}\right),
\end{align}
we have:
\begin{align}
a^2 \cdot \int_{a}^\infty f(x) dx =  a^2 \Phi\left(-\frac{a}{\sigma}\right). \label{eq:tail_term}
\end{align}

Combining \eqref{eq:final_first_term} and \eqref{eq:tail_term} into \eqref{eq:expectation_m2}, we get:
\begin{align}
\mathbb{E}(M'^2_{tn}) 
&= \frac{-a\sigma}{\sqrt{2\pi}}  \exp\left(-\frac{a^2}{2\sigma^2}\right) 
+ \sigma^2 \left( \Phi\left(\frac{a}{\sigma}\right) - \frac{1}{2} \right)
+ a^2 \Phi\left(-\frac{a}{\sigma}\right) \notag\\ 
&\approx \frac{\sigma^2}{2}. \label{eq:final_expectation_m2}
\end{align}
Since $ \Phi\left(-\frac{a}{\sigma}\right) $ is exponentially small for moderate $ a $, the term $ a^2 \Phi\left(-\frac{a}{\sigma}\right) $ is negligible compared to leading terms and is often omitted for simplicity.

Using $\text{Var}(M'_{tn}) = \mathbb{E}(M'^2_{tn}) - \mathbb{E}(M'_{tn})^2$, we calculate:
\begin{align}
\text{Var}(M'_{tn}) 
&\approx \frac{\sigma^2}{2} 
- \left[\frac{\sigma}{\sqrt{2\pi}} \left(1 - \exp\left(-\frac{a^2}{2\sigma^2}\right)\right)\right]^2 \notag \\
&\approx \frac{\pi - 1}{2\pi} \sigma^2 \notag \\
&= \frac{\pi - 1}{2\pi} f(1-f).
\end{align}

Given that $Y_{tnc} = M'_{tnc} \cdot X_{tnc}$, and based on Lemma \ref{lem:B.4} that the distributions of $X_{tnc}$and $M'_{tn}$can be considered independent, the expectation of $Y_{tnc}$ is:
\begin{align}
\mathbb{E}(Y_{tnc}) &= \mathbb{E}(M'_{tn}) \cdot \mathbb{E}(X_{tnc}) \notag \\
&\approx \sqrt{\frac{f(1-f)}{2\pi}} \mathbb{E}(X_{tnc}).
\end{align}

The variance of $Y_{tnc}$ is computed as:
\begin{align}
\text{Var}(Y_{tnc}) &= \text{Var}(M'_{tn}) \cdot \text{Var}(X_{tnc}) 
+ \text{Var}(M'_{tn}) \cdot \mathbb{E}(X_{tnc})^2 
+ \text{Var}(X_{tnc}) \cdot \mathbb{E}[M'_{tn}]^2 \notag \\
&= \frac{f(\pi-f)}{2\pi} f (1 - f) \notag \\
&\approx \frac{f(\pi - f)}{2\pi} \text{Var}(X_{tnc}).
\end{align}

Thus, the proposition is proven:
\begin{align}
\mathbb{E}(Y_{tnc}) \approx \sqrt{\frac{f(1-f)}{2\pi}} \mathbb{E}(X_{tnc}), \quad
\text{Var}(Y_{tnc}) \approx \frac{f(\pi-f)}{2\pi} \text{Var}(X_{tnc}).
\end{align}
\end{proof}
In practice, we recommend setting the hyperparameters as follows: $ b = 0 $ and $ a \in [1, 2] $. Setting $ b = 0 $ allows the processing module to completely eliminate certain features in the spatial domain. Furthermore, selecting \( a \in [1, 2] \) enables the processing module to selectively enhance specific spatial features. This also ensures that both the mean and variance do not become too large or too small, maintaining the numerical stability.

\subsection{Gradient Analysis}
\label{Gradient}
This section on the derivation of the traditional SNN network is mainly referenced from \cite{meng2022training, wu2019direct, ding2025rethinking}.
First, we derive the temporal gradient of the traditional SNN network, where the temporal gradient is primarily backpropagated through the membrane potential. Taking the vanilla LIF neuron as an example, we use the following form to analyze the gradient problem:
\begin{align}
\mathbf{H}^{l}(t+1)=\left(1-\frac{1}{\tau}\right)\left(\mathbf{H}^{l}(t)-\vartheta \mathbf{S}^{l}(t)\right)+\mathbf{W}^{l} \mathbf{S}^{l-1}(t+1), 
\end{align}
The derivative of the loss with respect to the weights $W_l$ is: 
\begin{align}
\nabla_{\mathbf{W}^{l}} \mathcal{L}=\sum_{t=0}^{T-1}{\frac{\partial \mathcal{L}}{\partial \mathbf{H}^{l}(t)}}^{\top} \mathbf{S}^{l-1}[t]^{\top}, l=L, L-1, \cdots, 1, 
\end{align}
The gradient expression can be written as:
\begin{align}
\frac{\partial \mathcal{L}}{\partial \mathbf{H}^{l}(t)} = &\underbrace{\frac{\partial \mathcal{L}}{\partial \mathbf{H}^{l+1}(t)} \frac{\partial \mathbf{H}^{l+1}(t)}{\partial \mathbf{S}^{l}(t)} \frac{\partial \mathbf{S}^{l}(t)}{\partial \mathbf{H}^{l}(t)}}_{Spatial\hspace{0.5em}Gradient}+ \notag \\ &\underbrace{\sum_{t^{\prime}=t+1}^{T-1} \frac{\partial \mathcal{L}}{\partial \mathbf{H}^{l+1}\left(t^{\prime}\right)} \frac{\partial \mathbf{H}^{l+1}\left(t^{\prime}\right)}{\partial \mathbf{S}^{l}\left(t^{\prime}\right)} \frac{\partial \mathbf{S}^{l}\left(t^{\prime}\right)}{\partial \mathbf{H}^{l}\left(t^{\prime}\right)} \prod_{t^{\prime \prime}=1}^{t^{\prime}-t} \epsilon^{L}\left(t^{\prime}-t^{\prime \prime}\right)}_{Temporal\hspace{0.5em}Gradient}, l<L,
\end{align}
\begin{align}
    \frac{\partial \mathcal{L}}{\partial \mathbf{H}^{l}(t)}
=\underbrace{\frac{\partial \mathcal{L}}{\partial \mathbf{S}^{l}(t)} \frac{\partial \mathbf{S}^{l}(t)}{\partial \mathbf{H}^{l}(t)}}_{Spatial \hspace{0.5em}Gradient }
+\underbrace{\sum_{t^{\prime}=t+1}^{T-1} \frac{\partial \mathcal{L}}{\partial \mathbf{S}^{l}\left(t^{\prime}\right)} \frac{\partial \mathbf{S}^{l}\left(t^{\prime}\right)}{\partial \mathbf{H}^{l}\left(t^{\prime}\right)} \prod_{t^{\prime \prime}=1}^{t^{\prime}-t} \epsilon^{L}\left(t^{\prime}-t^{\prime \prime}\right)}_{Temporal\hspace{0.5em}Gradient}, l=L,
\end{align}
$\epsilon^L$is defined as the sensitivity of the membrane potential $H^l(t+1)$ with respect to $H^l(t)$ between adjacent timesteps.
\begin{align}
\epsilon^{l}(t) \equiv\frac{\partial \mathbf{H}^{l}(t+1)}{\partial \mathbf{H}^{l}(t)}+\frac{\partial \mathbf{H}^{l}(t+1)}{\partial \mathbf{S}^{l}(t)} \frac{\partial \mathbf{S}^{l}(t)}{\partial \mathbf{H}^{l}(t)}.
\end{align}
If we use a simple rectangular function as a surrogate for the gradient.
\begin{align}
\epsilon^{l}(t)_{jj}=\left\{\begin{array}{l}
0, \quad \frac{1}{2} \vartheta<H_{j}^{l}(t)< \frac{3}{2} \vartheta, \\
1-\frac{1}{\tau}, \quad \text { otherwise } .
\end{array}\right.    
\end{align}
From the above equation, it can be concluded that if the membrane potential approaches the threshold at any given timestep, the temporal gradient 
 $\prod_{t^{\prime \prime}=1}^{t^{\prime}-t} \epsilon^{L}\left(t^{\prime}-t^{\prime \prime}\right)$ will vanish. This highlights a common issue with temporal gradients in the vanilla LIF model, which remains a problem even with short timesteps.

Next, we perform gradient analysis on neurons with a feedback structure. Assume the structure of the feedback is $\varphi$, which includes PM and CM.
\begin{align}
   \mathbf{H}^{l}(t+1)=\left(1-\frac{1}{\tau}\right)\left(\mathbf{H}^{l}(t)-\vartheta \mathbf{S}^{l}(t)\right)+\mathbf{W}^{l} \mathbf{S}^{l-1}(t+1)+\varphi_\theta  ( \mathbf{S}^{l}(t)) 
\end{align}
Following the above derivation, we similarly define the variable $\epsilon$:
\begin{align}
    \epsilon^{l}(t)\equiv \frac{\partial \mathbf{H}^{l}(t+1)}{\partial \mathbf{H}^{l}(t)}
+\frac{\partial \mathbf{H}^{l}(t+1)}{\partial \mathbf{S}^{l}(t)} \frac{\partial \mathbf{S}^{l}(t)}{\partial \mathbf{H}^{l}(t)} 
+ \underbrace{\frac{\partial{\mathbf{H}^l}(t+1)}{\partial{\varphi_\theta(\mathbf{S}^l(t))}} \frac{\partial{\varphi_\theta(\mathbf{S}^l(t))}}{\partial{\mathbf{S}^l(t)}} \frac{{\partial{\mathbf{S}^l(t)}}}{\partial{\mathbf{H}^l(t)}}}_{Feedback\hspace{1em}gradient}
\end{align}
\begin{align}   
\epsilon^{l}(t) = \left(1-\frac{1}{\tau}\right)-\left(1-\frac{1}{\tau}\right) \vartheta \cdot \frac{\partial \mathbf{S}^{l}(t)}{\partial \mathbf{H}^{l}(t)}+
\frac{\partial{\varphi_\theta(\mathbf{S}^l(t))}}{\partial{\mathbf{S}^l(t)}} \frac{{\partial{\mathbf{S}^l(t)}}}{\partial{\mathbf{H}^l(t)}}
\end{align}
Similarly we have:
\begin{align}
   \epsilon^{l}(t)_{jj}=\left\{\begin{array}{l}
\frac{\partial{\varphi_\theta(\mathbf{S}^l(t))}}{\partial{\mathbf{S}^l(t)}}, \quad \frac{1}{2} \vartheta<H_{j}^{l}(t)< \frac{3}{2} \vartheta, \\
1-\frac{1}{\tau}, \quad \text { otherwise } .
\end{array}\right. 
\end{align}
Then, in training, 
$\frac{\partial{\varphi_\theta(\mathbf{S}^l(t))}}{\partial{\mathbf{S}^l(t)}}$ is not possible to be zero.

\section{Supplementary Results}
\label{a3}

\subsection{Energy Consumption Calculation of TDFormer}
\label{energy}
This section is mainly referenced from \cite{zhou2024qkformer}. We calculate the number of Synaptic Operations (SOPs) of spike before calculating theoretical energy consumption for TDFormer.
\begin{align}
    \mathrm{SOP}=f_r \times T \times \mathrm{FLOPs}
\end{align}
where $f_r$ is the firing rate of the block and $T$ is the simulation time step of spiking neuron. $\text{FLOPS}$ refers to floating point operations of block, which is the number of multiply-and-accumulate (MAC) operations and SOP is the number of spike-based accumulate (AC) operations.
\begin{align}
    E_{\mathrm{TDFormer}}=E_{Baseline} +E_{A C} \times\left(\mathrm{SOP}_{\text {PM}}+\mathrm{SOP}_{\text{CM}} \right)
\end{align}
The channel-wise token mixer in TDFormer is highly power-efficient, consisting of only a linear layer, a LIF neuron, and a BN layer. The BN parameters can be fused into the linear layer via reparameterization, making its power consumption negligible. The linear layer maintains a constant channel dimension, resulting in much lower power usage than conventional MLPs. Furthermore, the spatial-wsie token mixer in PM has a time complexity of only $O(ND)$, which is much lower than the $O(N^2D)$ of SSA. In the CM module, although a token mixer is used, the firing rates in both PM and CM are very low. In our experiments, we observed that the firing rate in both modules remains around 0.05. As a result, the overall power overhead of TDFormer is marginal.

\subsection{Additional Experiments and Visualizations}
\label{C2}

\begin{table}[t!]
  \caption{Results averaged across seeds: 0, 42, 2024, 3407 and 114514. Bold results indicate superior performance compared to the baselines.}
  \label{avgseed}
  \centering
  \begin{tabular}{ccccc}
\toprule
Methods & Dataset/Time Step & Architecture & Baseline & CM1+V1 \\ \midrule
\multirow{10}{*}{SpikeformerV1} & CIFAR-10/T = 2 & \multirow{2}{*}{Spikformer-2-384} & \textbf{94.18$\pm$0.06} & 94.07$\pm$0.07 \\
 & CIFAR-10/T = 4 &  & 94.84$\pm$0.14 & \textbf{94.86$\pm$0.05} \\
 & CIFAR-10/T = 2 & \multirow{8}{*}{Spikformer-4-384} & 93.65$\pm$0.23 & \textbf{94.05$\pm$0.14} \\
 & CIFAR-100/T =2 &  & 75.25$\pm$0.19 & \textbf{75.99$\pm$0.12} \\
 & CIFAR-10/T = 4 &  & 94.73$\pm$0.06 & \textbf{95.13$\pm$0.07} \\
 & CIFAR-100/T = 4 &  & 77.56$\pm$0.22 & \textbf{77.60$\pm$0.26} \\
 & CIFAR-10/T = 6 &  & 95.09$\pm$0.08 & \textbf{95.16$\pm$0.14} \\
 & CIFAR-100/T = 6 &  & \textbf{78.21$\pm$0.22} & 77.99$\pm$0.05 \\
 & CIFAR10-DVS/T = 10 &  & 78.08$\pm$0.70 & \textbf{78.13$\pm$0.72} \\
 & CIFAR10-DVS/T= 16 &  & 79.40$\pm$0.36 & \textbf{80.20$\pm$0.75} \\
\multirow{8}{*}{SDTV1} & CIFAR-10/T = 4 & \multirow{2}{*}{\begin{tabular}[c]{@{}c@{}}Spiking \\ Transformer-2-512\end{tabular}} & 95.76$\pm$0.06 & \textbf{95.92$\pm$0.02} \\
 & CIFAR-100/T =4 &  & 79.15$\pm$0.14 & \textbf{79.35$\pm$0.16} \\
 & CIFAR-10/T = 4 & \multirow{6}{*}{\begin{tabular}[c]{@{}c@{}}Spiking \\ Transformer-2-256\end{tabular}} & 94.47$\pm$0.11 & \textbf{94.64$\pm$0.04} \\
 & CIFAR-100/T =4 &  & 76.15$\pm$0.13 & \textbf{76.26$\pm$0.13} \\
 & DVS128 Gesture/T=10 &  & 96.79$\pm$0.67 & \textbf{96.92$\pm$0.29} \\
 & DVS128 Gesture/T=16 &  & 97.98$\pm$0.59 & \textbf{99.04$\pm$0.28} \\
 & CIFAR10-DVS/T = 10 &  & 75.03$\pm$0.67 & \textbf{75.05$\pm$0.11} \\
 & CIFAR10-DVS/T = 16 &  & 77.07$\pm$0.19 & \textbf{77.45$\pm$0.43} \\ \bottomrule
\end{tabular}
\end{table}

\begin{table}[p]
\label{CM}
\caption{Results of different TDFormer variants. The results in bold indicate superior performance compared to the baseline. The default configuration used in our work is indicated by *. CM1-CM3 denote different strategies for integrating top-down information with bottom-up features. CM1: $S_{td}$ is fused into the computation of the attention map. CM2: $S_{td}$ is fused into the value of self-attention. CM3:$S_{td}$ is incorporated into the input of the attention module.}
\label{table0}
\centering
\begin{tabular}{ccccccc}
\toprule
\multirow{2}{*}{Model Type} & \multicolumn{3}{c}{\begin{tabular}[c]{@{}c@{}}SpikeformerV1\\ (Spikformer-4-384)\end{tabular}} & \multicolumn{3}{c}{\begin{tabular}[c]{@{}c@{}}SDTV1\\ (Spiking Transformer-2-256)\end{tabular}} \\ \cmidrule(l){2-7} 
 & Acc (\%) & FLOPs (G) & Param (M) & Acc (\%) & FLOPs (G) & Param (M) \\ \midrule
Baseline & 94.73 & 3.71 & 9.33 & 94.47 & 1.25 & 2.57 \\
*CM1+V1 & \textbf{95.14} & 3.88 & 9.92 & \textbf{94.77} & 1.31 & 2.69 \\
CM1+V2 & \textbf{94.79} & 3.88 & 9.92 & \textbf{94.93} & 1.31 & 2.69 \\
CM1+V3 & \textbf{94.90} & 3,88 & 9.92 & \textbf{94.61} & 1.31 & 2.69 \\
CM1+V4 & \textbf{94.94} & 3.88 & 9.92 & \textbf{94.88} & 1.31 & 2.69 \\
CM2+V1 & \textbf{94.88} & 3.88 & 9.92 & \textbf{94.73} & 1.31 & 2.69 \\
CM2+V2 & \textbf{94.75} & 3.88 & 9.92 & \textbf{94.79} & 1.31 & 2.69 \\
CM2+V3 & 94.70 & 3.88 & 9.92 & \textbf{94.75} & 1.31 & 2.69 \\
CM2+V4 & \textbf{95.27} & 3.88 & 9.92 & \textbf{94.66} & 1.31 & 2.69 \\
CM3+V1 & 94.69 & 3.90 & 9.92 & 94.43 & 1.32 & 2.69 \\
CM3+V2 & \textbf{94.89} & 3.90 & 9.92 & \textbf{94.69} & 1.32 & 2.69 \\
CM3+V3 & 94.35 & 3.90 & 9.92 & 93.94 & 1.32 & 2.69 \\
CM3+V4 & \textbf{94.90} & 3.90 & 9.92 & \textbf{94.61} & 1.32 & 2.69 \\ \bottomrule
\end{tabular}
\end{table}

\begin{figure}[p]
\centering
  \includegraphics[width=0.85\textwidth]{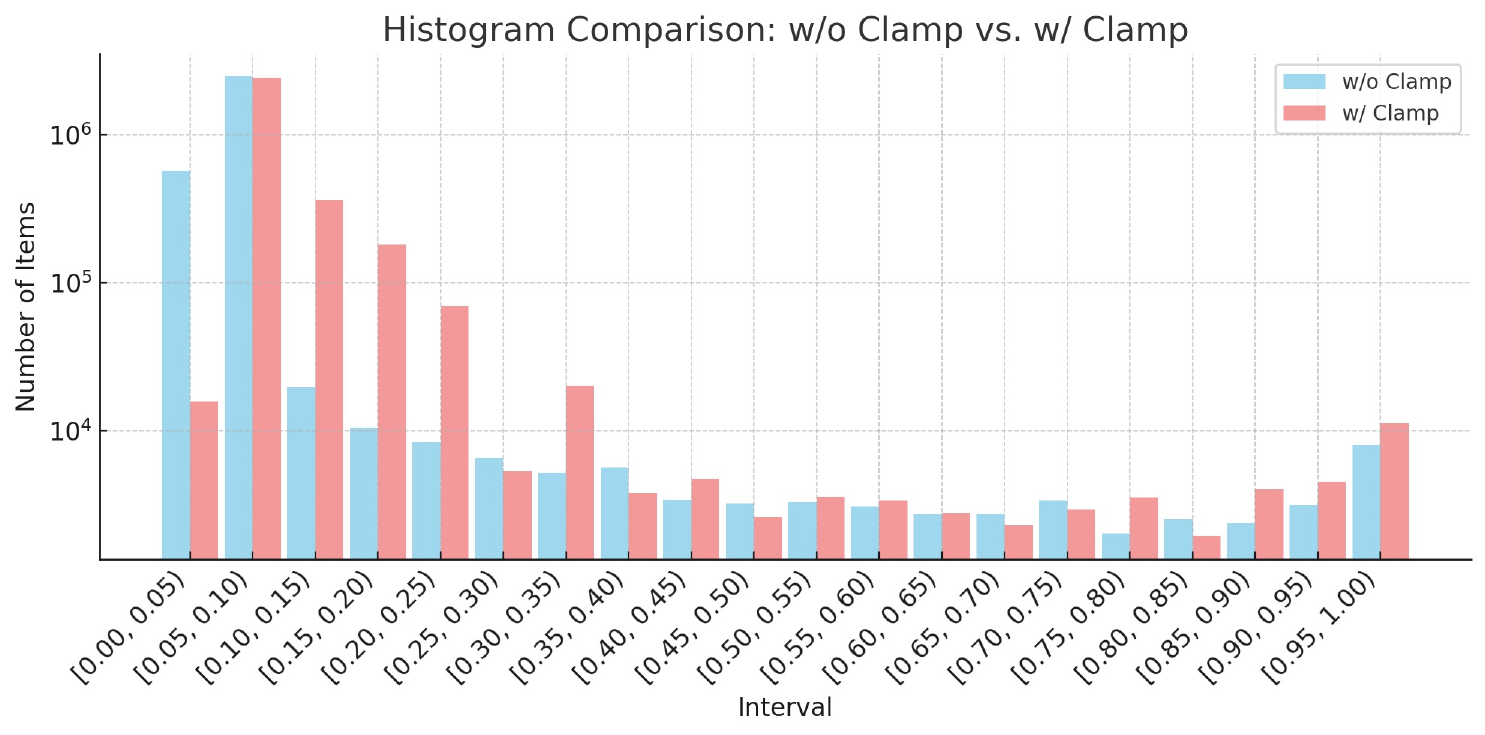}
  \\
  \caption{This is the histogram of the gradient of the surrogate function for LIF neurons in the attention module within the PM model. From the figure, we can see that the clamp operation ensures that the variance in the attention map does not become too large, thus preventing the vanishing gradient problem.}
  \label{clamp}
\end{figure}

\begin{table}[t!]
\caption{Robustness comparison on the CIFAR-10C dataset. The results in bold indicate superior performance compared to the baseline. Average performance across different distortion types is indicated by *.}
\label{tabel3}
\centering
\begin{tabular}{ccccccc}
\toprule
\multirow{2}{*}{\begin{tabular}[c]{@{}c@{}}Corruption \\ Type\end{tabular}} & \begin{tabular}[c]{@{}c@{}}SpikformerV1\\ /TDFormer\end{tabular} & \multirow{2}{*}{\begin{tabular}[c]{@{}c@{}}Time \\ Step\end{tabular}} & \begin{tabular}[c]{@{}c@{}}SpikformerV1\\ /TDFormer\end{tabular} & \multirow{2}{*}{\begin{tabular}[c]{@{}c@{}}Corruption \\ Type\end{tabular}} \\ \cmidrule(lr){2-2} \cmidrule(lr){4-4}
 & Acc (\%) &  & Acc (\%) &  \\ \midrule
\multirow{3}{*}{Brightness} & 91.32/91.27 (-0.05) & 1 & \textbf{76.23/76.97 (+0.74)} & \multirow{3}{*}{Motion Blur} \\
 & \textbf{91.87/91.94 (+0.06)} & 2 & \textbf{77.00/78.30 (+1.30)} &  \\
 & \textbf{93.14/93.29 (+0.15)} & 4 & \textbf{79.44/80.01 (+0.57)} &  \\ \cmidrule(lr){2-2} \cmidrule(lr){4-4}
\multirow{3}{*}{Contrast} & \textbf{69.93/70.40 (+0.47)} & 1 & \textbf{79.31/79.51 (+0.20)} & \multirow{3}{*}{Pixelate} \\
 & \textbf{70.41/71.25 (+0.84)} & 2 & 78.70/78.67 (-0.03) &  \\
 & 77.06/76.57 (-0.49) & 4 & \textbf{81.14/81.45 (+0.31)} &  \\ \cmidrule(lr){2-2} \cmidrule(lr){4-4}
\multirow{3}{*}{Defocus Blur} & \textbf{80.59/80.83 (+0.24)} & 1 & 87.33/87.10 (-0.23) & \multirow{3}{*}{Saturate} \\
 & \textbf{81.39/82.15 (+0.76)} & 2 & \textbf{88.30/88.44 (+0.14)} &  \\
 & 82.88/82.75 (-0.13) & 4 & \textbf{90.58/90.60 (+0.02)} &  \\ \cmidrule(lr){2-2} \cmidrule(lr){4-4}
\multirow{3}{*}{Elastic Transform} & \textbf{84.00/84.05 (+0.05)} & 1 & \textbf{69.63/70.68 (+1.05)} & \multirow{3}{*}{Shot Noise} \\
 & \textbf{84.10/84.63 (+0.53)} & 2 & \textbf{70.96/71.09 (+0.13)} &  \\
 & 85.54/85.52 (-0.02) & 4 & \textbf{73.23/73.32 (+0.09)} &  \\ \cmidrule(lr){2-2} \cmidrule(lr){4-4}
\multirow{3}{*}{Fog} & \textbf{84.29/85.22 (+0.93)} & 1 & \textbf{84.47/84.71 (+0.24)} & \multirow{3}{*}{Snow} \\
 & \textbf{85.09/85.75 (+0.66)} & 2 & \textbf{84.72/84.72 (+0.00)} &  \\
 & \textbf{87.25/87.53 (+0.28)} & 4 & \textbf{86.90/87.18 (+0.28)} &  \\ \cmidrule(lr){2-2} \cmidrule(lr){4-4}
\multirow{3}{*}{Frost} & \textbf{82.35/82.66 (+0.31)} & 1 & 88.20/88.03 (-0.17) & \multirow{3}{*}{Spatter} \\
 & \textbf{83.04/83.27 (+0.23)} & 2 & \textbf{87.58/87.71 (+0.13)} &  \\
 & \textbf{85.46/85.70 (+0.24)} & 4 & 89.14/89.02 (-0.12) &  \\ \cmidrule(lr){2-2} \cmidrule(lr){4-4}
\multirow{3}{*}{Gaussian Blur} & \textbf{73.33/74.05 (+0.72)} & 1 & \textbf{71.77/72.66 (+0.89)} & \multirow{3}{*}{Speckle Noise} \\
 & \textbf{74.79/75.84 (+1.05)} & 2 & 72.66/72.64 (-0.02) &  \\
 & \textbf{76.08/76.25 (+0.17)} & 4 & \textbf{75.10/75.37 (+0.27)} &  \\ \cmidrule(lr){2-2} \cmidrule(lr){4-4}
\multirow{3}{*}{Gaussian Noise} & \textbf{61.35/62.71 (+1.36)} & 1 & \textbf{75.98/76.68 (+0.70)} & \multirow{3}{*}{Zoom Blur} \\
 & 63.05/62.71 (-0.34) & 2 & \textbf{77.60/78.75 (+1.15)} &  \\
 & \textbf{64.34/64.89 (+0.55)} & 4 & \textbf{78.68/79.14 (+0.46)} &  \\ \cmidrule(lr){2-2} \cmidrule(lr){4-4}
\multirow{3}{*}{Impulse Noise} & \textbf{67.84/68.10 (+0.26)} & 1 & \textbf{57.86/58.26 (+0.40)} & \multirow{3}{*}{Glass Blur} \\
 & 65.83/65.36 (-0.47) & 2 & 56.09/55.81 (-0.28) &  \\
 & \textbf{65.98/66.93 (+0.95)} & 4 & \textbf{59.43/60.46 (+1.03)} &  \\ \cmidrule(lr){2-2} \cmidrule(lr){4-4}
\multirow{3}{*}{JPEG Compression} & \textbf{83.32/83.53 (+0.21)} & 1 & \textbf{78.11/78.55 (+0.44)} & \multirow{3}{*}{* Avg} \\
 & \textbf{83.93/84.00 (+0.07)} & 2 & \textbf{78.52/78.84 (+0.32)} &  \\
 & \textbf{84.60/84.76 (+0.16)} & 4 & \textbf{80.53/80.78 (+0.25)} &  \\ \bottomrule
\end{tabular}
\end{table}

\begin{figure}[p]
\centering
  \includegraphics[width=0.8\textwidth]{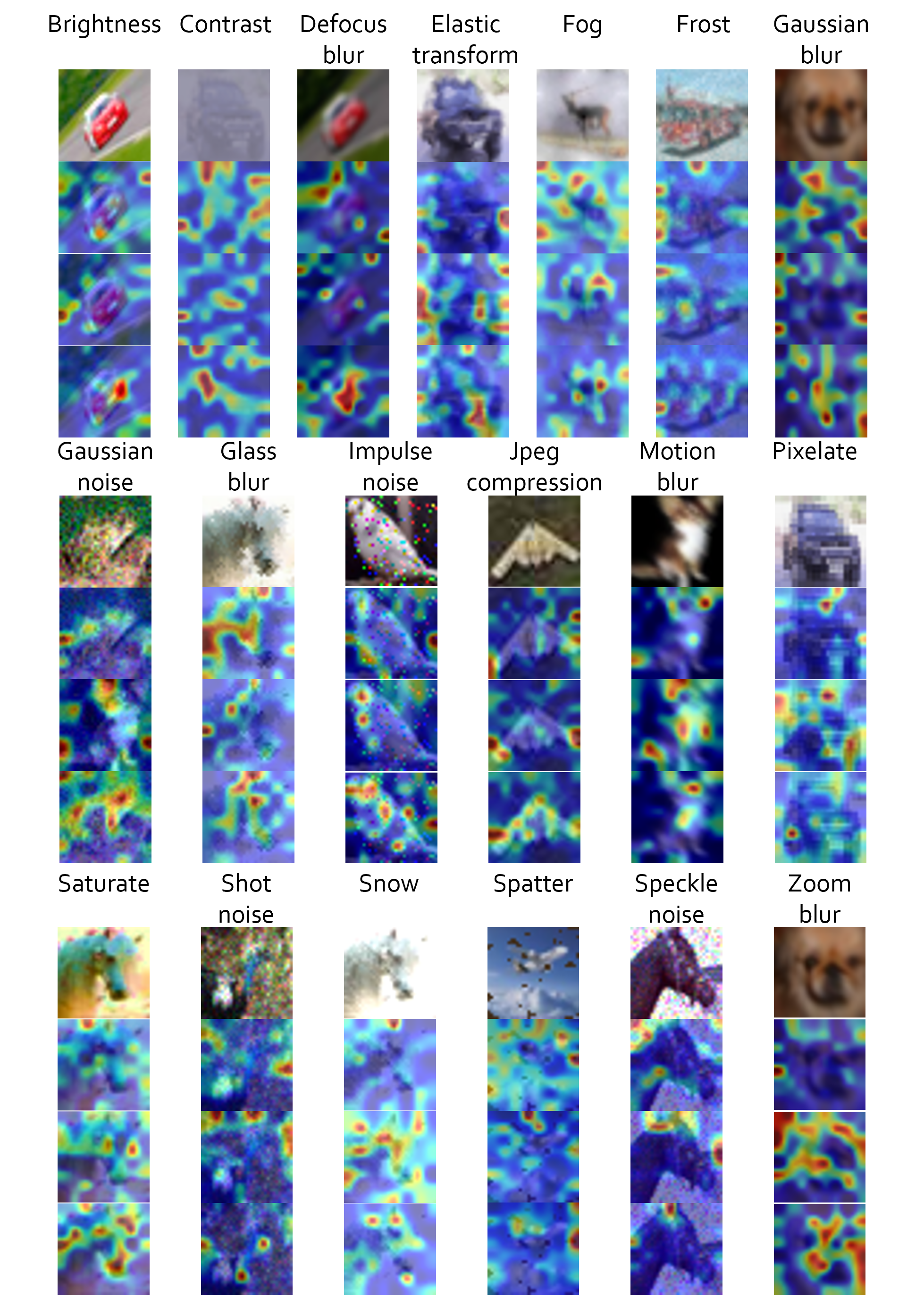}\\
  \caption{Visualization of CIFAR-10C. This figure showcases 19 columns corresponding to 19 different types of corruptions. Each column contains four images: the top image displays the original CIFAR-10C image; the second image shows the visualization result of the baseline model; the third image illustrates the first feedforward stage of the TDFormer model; the fourth image depicts the second feedforward stage of the TDFormer model, demonstrating the model’s dynamic attention adjustments across stages.}
  \label{vis_cifar10}
\end{figure}

\begin{figure}[p]
\centering
  \includegraphics[width=0.8\textwidth]{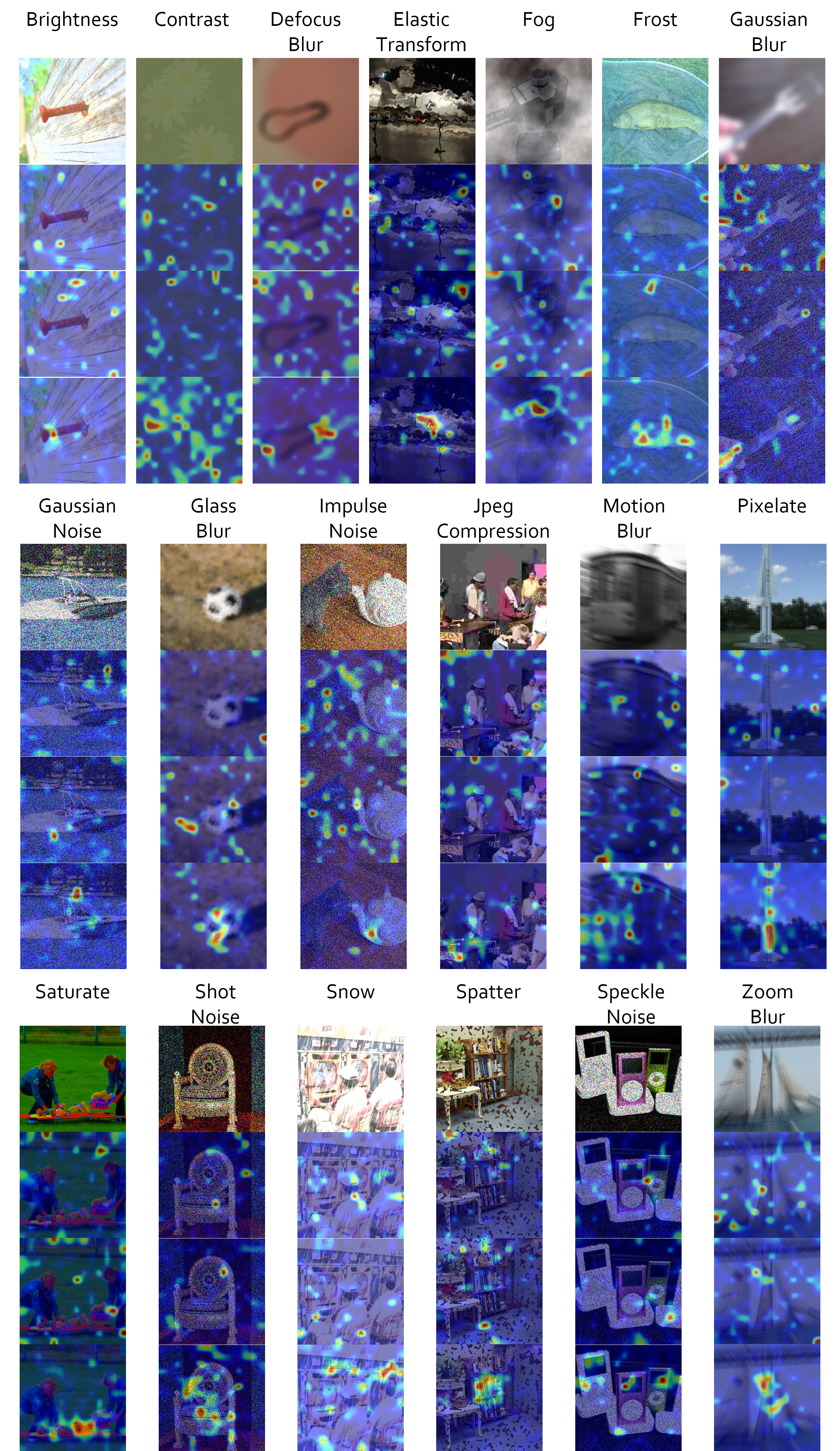}\\
  \caption{Visualization of ImageNet-C. This figure showcases 19 columns corresponding to 19 different types of corruptions. The layout and visualization style are similar to those shown in Figure \ref{vis_cifar10}.}
  \label{vis_imagenet}
\end{figure}

\newpage
\section{Limitations, Future Work, and Broader Impacts}
\subsection{Limitations}
Despite the promising enhancements introduced by our proposed TDFormer with top-down feedback structure for spiking neural networks, several limitations remain. First, the current feedback mechanism is specifically designed for Transformer-based architectures and may not be directly applicable to CNN-based SNNs, limiting its architectural generalizability. Second, our evaluation has so far been limited to image classification tasks, which may not fully reflect the method’s effectiveness in other domains such as object detection\cite{luo2024integer}, semantic segmentation\cite{kim2022beyond}, and NLP tasks\cite{lv2024spiking}.
\subsection{Future Work}
Future work could focus on generalizing the proposed TDFormer architecture to other network backbones, such as CNN-based spiking neural networks, thereby improving its architectural compatibility and deployment flexibility. In addition, extending the evaluation of TDFormer to tasks such as object detection, semantic segmentation, and natural language processing would provide deeper insights into its generalization capacity across diverse domains and data modalities. Moreover, we observe that the proposed top-down feedback structure increases the diversity of spike patterns\cite{shen2024rethinking}, which may contribute to the observed performance gains. Investigating the underlying relationship between spike diversity and task performance remains an important direction for future research.
\subsection{Broader Impacts}
This paper focuses on the fundamental research of spiking neural networks, introducing a top-down feedback structure that aims to enhance their performance. Generally, there are no negative societal impacts in this work.

\clearpage

\end{document}